\documentclass[twoside,11pt]{article}

%

\usepackage{jmlr2e} 

\usepackage{natbib}

\usepackage{todonotes}

\usepackage{caption}
\usepackage{subcaption}
\usepackage{url}
\usepackage{amssymb}
\usepackage{graphicx}
\usepackage{amsmath}
\usepackage{color}
\usepackage{listings}
\usepackage[framed, numbered]{matlab-prettifier}
\usepackage[T1]{fontenc}
\usepackage{float,graphicx,epsfig,amsmath,graphpap,matlab-prettifier, bm}
\usepackage[vmargin=3cm, hmargin=3cm]{geometry}
\usepackage[noend]{algpseudocode}
\usepackage{algorithm}




\newcommand{\one}{\mathbf{1}}
\newcommand{\Pro}{\mathbb P}
\newcommand{\E}{\mathbb E}
\newcommand{\iy}{\infty}

\newcommand{\RR}{\mathbb{R}}
\newcommand{\bx}{\bm{x}}
\newcommand{\bX}{\bm{X}}

\newcommand{\bs}{\bm{s}}
\newcommand{\bv}{\bm{v}}
\newcommand{\bu}{\bm{u}}
\begin{document}

\firstpageno{1}

\title{Robust Neural Network Classification via Double Regularisation}

\author{\name Olof Zetterqvist \email olofze@chalmers.se \\
       \addr Mathematical Sciences\\
       Chalmers University of Technology and University of Gothenburg\\
       Gothenburg, Sweden \\
       Research sponsored by Wallenberg AI, Autonomous Systems and Software Program (WASP)
       \AND
       \name Rebecka J\"ornsten \email jornsten@chalmers.se \\
       \addr Mathematical Sciences\\
       Chalmers University of Technology and University of Gothenburg\\
       Gothenburg, Sweden \\
       Research supported by SSF (Swedish Foundation for Strategic Research) and the Swedish Research Council.
       \AND
       \name Johan Jonasson \email jonasson@chalmers.se \\
       \addr Mathematical Sciences\\
       Chalmers University of Technology and University of Gothenburg\\
       Gothenburg, Sweden \\ Research supported by Wallenberg AI, Autonomous Systems and Software Program (WASP) and Centiro Solutions AB.}
       
\editor{XXX}


\maketitle

\begin{abstract}
The presence of mislabelled observations in data is a notoriously challenging problem in statistics and machine learning, associated with poor generalisation properties for both traditional classifiers and, perhaps even more so, flexible classifiers like neural networks.
Here we propose a novel double regularisation of the neural network training loss that combines a penalty on the complexity of the classification model and an optimal reweighting of training observations. 
The combined penalties result in improved generalisation properties and strong robustness against overfitting in different settings of mislabelled training data and also against variation in initial parameter values when training.
We provide a theoretical justification, by proving that for logistic regression with multivariate Gaussian covariates,
our proposed method can find the correct parameters exactly, i.e.\ estimate the parameters to exactly the same value as if there were no mislabelling.
We demonstrate the double regularisation model, here denoted by DRFit, for neural net classification of (i) MNIST and (ii) CIFAR-10, in both cases with simulated mislabelling. 
We also illustrate that DRFit identifies mislabelled data points with very good precision.
This provides strong support for DRFit as a practical of-the-shelf classifier, since, without any sacrifice in performance, we get a classifier that simultaneously reduces overfitting against mislabelling and gives an accurate measure of the trustworthiness of the labels.
\end{abstract}

\begin{keywords}
  Deep Learning, Robust Statistics, Regularisation, Mislabelling, Mislabelled data, Contaminated labels, DRFit, weighted loss.
\end{keywords}

\newpage

\section{Introduction}

In 
supervised, semi-supervised or active learning with machine learning algorithms in general, and deep neural nets in particular, the default models assume that the given annotations of training data are correct. However, since well-annotated data sets can be expensive and time consuming to collect, a fair amount of recent research has focused on using larger but noisy sets of training data. Such data sets are much cheaper to collect, e.g.\ via crowd sourcing. The working assumption is that collecting larger amounts of data that are labelled with a decent accuracy can compensate for the noise (inaccurate labels).

Clearly, 
it is desirable that parameter estimation in neural networks is robust 
with respect to changing a small fraction of the labels 
of the training data. Results in the literature are conflicting, with findings 
depending on the noise distribution, the type of neural net, the amount of correctly labelled data and level of contamination. 
While some results indicate that classification performance on correct labels can be noise stable (e.g.\ \citep{rolnick2017deep}), there are many studies that have shown that neural networks are
highly sensitive to label contamination \citep{pmlr-v70-koh17a,zhang2016understanding,arpit2017closer}.

There is 
a variety of methods for dealing with noisy labels \citep{NoiseSurvey,song2020learning}. One approach is to try to identify training examples that are likely to be mislabelled and correct them manually \citep{pmlr-v70-koh17a,pleiss2020identifying}.
This is computationally costly and suffers from a chicken-and-egg problem; examples that stand out as likely to be incorrectly labelled do so on the basis of a model that is in itself trained on noisy data.

Some methods focus on using data augmentation in order to  reduce the impact of noisy labels \citep{zhang2017mixup}, while others estimate a confusion matrix that compensates for the noise in the labels \citep{goldberger2016training,hendrycks2018using}. 

Another approach is to incorporate the noise directly into the loss function  of the training model \citep{43273,NIPS2013_5073, sukhbaatar2015training, LiuReweighting,tanaka2018joint,arazo2019unsupervised}, either via a surrogate loss function that augments the classification loss with noise rate parameters or with a data reconstruction term.
For all the methods mentioned, overfitting can still be a problem,
which is then usually addressed via early stopping.
The intuition behind early stopping is that the optimisation algorithm at its early stages finds the large scale classification boundaries and only after that starts to fine-tune to individual training examples and thereby overfit. This intuition is supported by recent theoretical results (e.g.\ 
\citep{pmlr-v108-li20j}). Early stopping results in the final network parameters to be a mix of starting values and optimal values. Several questions arise; how should the starting values be chosen?, when should we stop?. Also, from a statistical point of view, it is very unsatisfactory 
to define a model (loss) that we 
in some sense do not ultimately use.

Another alternative to overcome overfitting is
through some form of regularisation on the network parameters, e.g.\ lasso or ridge. However, we will here show that this does not fully alleviate overfitting in the presence of training data mislabelling.

In this paper we suggest a double regularisation (DR) technique where,
in addition to using lasso or ridge, each term in the loss function is multiplied with an observation weight. The observation weights will be penalised for deviating from all being equal. The weights can thus be seen as a new set of model parameters, as many as the number of observations, apparently adding to the complexity of the model. However, with the proper choice of regularisation penalty for the observation weights, when optimising this new loss function, it is easy to solve for the observation weights in terms of the original parameters and thereby obtain an explicit new loss function, where in fact no new parameters have been added and hence no extra complexity has been imposed. 
The benefits of the new model
 include; (a) a mathematical formulation that explains the weighting methodology as a regularisation mechanism and (b) circumventing the need for early stopping since the loss function is explicitly optimised.
 
We claim that intuitively this model does not overfit, since early on during training, the weights will adjust 
so that observations that stand out too much will be largely ignored. Hence the observation weights work in themselves
as early stopping but without the need to actually stop. 
Experimental results very strongly suggest this intuition is correct. 
In addition, in Section \ref{section_theory} we provide a theoretical justification, showing that for logistic regression, DRFit performs strictly better than a standard regularised neural network. Indeed, for multivariate Gaussian covariates with equal covariance structure in the two classes, DRFit finds the correct parameters {\em exactly}.

\section{Methods}

Consider a classification problem in the space $X \times Y$ into $K$ different classes, where $X$ is the input space and $Y$ the label space and let $z_i = (x_i,y_i) \in X \times Y$, $i=1,\ldots,n$ be the observations. For inference, we use a model $y=f(x;\theta)$, where $\theta \in \Theta$ are the model parameters and $x \in X$. In order to find the optimal $\theta$, a loss function $C(\theta;z) = \sum_i L(\theta;z_i)$ is minimised. 

Now consider uniform label noise such that each training point is contaminated with a probability that may depend on class but is otherwise independent of the features. In a setting with the presence of mislabelled data, a model with even a few parameters can easily overfit in the sense that it eventually learns a false and overly complex structure, trying to separate between instances in the training data that have different labels but are in truth of the same class.
As a result, the model generalises poorly to new data. 
To alleviate this problem, we propose a regularisation technique that gives each training point an observation weight that reflects how much we ``believe'' in the annotation of that point, combined with a penalty for observation weights for deviating too much from one (since without such a penalty, the model would simply put all weight on one example from each class). The idea is that our model will learn observation weights that are close to $0$ for mislabelled data while leaving other weights well away from $0$.
When the model is also over-specified, we combine this observation weighting with some standard regularisation on the model parameters, such as a ridge or lasso regularisation.  

The minimisation object of our doubly regularised model, DRFit, can in its general form thus be written as

\begin{align}
    (\hat{\theta}, \hat{\omega}) = \text{argmin}_{\theta,\omega} \sum_i \omega_i L(z_i,\theta) + \alpha g(\omega) + \lambda \tilde g(\theta),
    \label{ref:min}
\end{align}

where $\alpha$ and $\lambda$ are hyper-parameters controlling the amount of regularisation, and $g$ and $\tilde g$ are regularisation functions which are typically convex functions and where the last term may be dropped if the model is not over-parametrised. For the rest of this paper, we will consider $\tilde g = \frac{1}{2}||\cdot ||_2^2$, i.e.\ a ridge regularisation on the original model parameters, or $\tilde g = 0$ in case of no overspecification.

There are some recent studies that utilise observation weights in the context of regularised or robust regression \citep{luo2016robust, gao2016penalized, gao2017penalize} as well as some methods on neural networks \citep{ren2018learning} where the observation weights are calculated based on a comparison between the gradients of training and validation data. With the extra forward and backward passes
this algorithm requires approximately three times as many computations as a regular training algorithm.

The two hyperparameters in (\ref{ref:min}) control the trade-off between observation weighting and regularisation of the network parameters. A larger $\lambda$ reduces the complexity of the neural network and a larger $\alpha$ restricts the flexibility of observation reweighting.
While it may appear as if we have introduced a large number of extra parameters with observation weights, as mentioned above, 
with a certain natural choice of regularisation function $g$, we will be able to explicitly solve for the observation weights in terms of the model parameters, hereby getting a new modified minimisation objective explicitly expressed in terms of the loss function $L(\theta;z)$.  

\section{Choice of weight regularisation function} \label{Choice_of_weight_regularisation_function}

In this paper we will use an entropy regularisation on the observation weights: $g(\omega_i) = \omega_i \log(\omega_i) - \omega_i$ combined with the constraint that $\sum_{i:y_i \in C_k}\omega_i = \rho_k|C_k|$ for each class $k$. Here $C_k$ is the set of training examples that have been labelled $k$ and $\rho_k$ is a scaling factor that can be tuned to fit the label noise. The main purpose of the $\rho_k$'s is to make up for the difference in class proportions as they appear in the noisy training data and the true proportions. To estimate the true proportions, a natural assumption is that we have access to a clean set of validation data. If not, the natural choice is to set $\rho_k=1$ for all $k$.

As promised, we can now solve for the observation weights in terms of the original model parameters as follows. With the given $g$, the DRFit optimisation problem reads:

\begin{equation} \label{min_prob_analytic}
(\hat{\theta}, \hat{\omega}) = \text{argmin}_{\theta,\omega} \sum_i \left[ \omega_i L(z_i,\theta) + \alpha (\omega_i \log(\omega_i) - \omega_i) \right] + \lambda \frac{1}{2}||\theta||_2^2. 
\end{equation}

Solving for $\omega$ in terms of $\theta$ in (\ref{min_prob_analytic}) gives the following:

\begin{proposition} \label{prop:new_loss}
    Solving the minimisation problem (\ref{min_prob_analytic}), i.e.\ solving
    
    \[(\hat{\theta}, \hat{\omega^*}) = \textnormal{argmin}_{\theta,\omega} f(\theta,\omega),\]
    where
    \[f(\theta,\omega) = \sum_i \left[ \omega_i L(z_i,\theta) + \alpha (\omega_i \ln(\omega_i) - \omega_i) \right] + \lambda \frac{1}{2}||\theta||_2^2,\]
    
    with the constraints $\sum_{i \in C_k} \omega_i = \rho_k|C_k|$, $k=1,\ldots,K$ is equivalent to solving
    
    \begin{equation} \label{target}
       \hat{\theta} = \textnormal{argmin}_{\theta} h(\theta), 
    \end{equation}
    where
    \[h(\theta) = -\alpha \sum_k 
       \rho_k|C_k| \log\left(\sum_{i \in C_k} e^{-\frac{L(z_i,\theta)}{\alpha}}\right) + \frac{\lambda}{2}||\theta||_2^2, \]
    and then taking
    \[ \omega_i = \rho_k|C_k| \frac{e^{-L(z_i,\hat{\theta})/\alpha}}{\sum_{j \in C_k} e^{-L(z_j,\hat{\theta})/\alpha}}.\]
    
\end{proposition}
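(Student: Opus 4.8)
The plan is to eliminate the weight vector $\omega$ by \emph{partial minimisation}: first minimise $f(\theta,\omega)$ over $\omega$ with $\theta$ held fixed, and then minimise the resulting profile function over $\theta$. This is justified because $\min_{\theta,\omega} f = \min_\theta \big(\min_\omega f\big)$ for any function, so the joint minimiser is recovered by plugging the optimal $\omega(\theta)$ back in. For fixed $\theta$, the inner problem is convex in $\omega$: each summand $\omega_i L(z_i,\theta) + \alpha(\omega_i\ln\omega_i-\omega_i)$ is convex (the entropy term has second derivative $\alpha/\omega_i>0$ on $\omega_i>0$), the equality constraints $\sum_{i\in C_k}\omega_i=\rho_k|C_k|$ are affine, and the objective separates across the classes $C_1,\dots,C_K$ since no constraint couples different classes. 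Hence the inner minimisation can be solved one class at a time, and any stationary point of the Lagrangian is the unique global minimiser.

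Next I would introduce, for each class $k$, a multiplier $\beta_k$ for the constraint and write the stationarity condition $L(z_i,\theta)+\alpha\ln\omega_i-\beta_k=0$, which gives $\omega_i=e^{\beta_k/\alpha}\,e^{-L(z_i,\theta)/\alpha}$. Substituting this into the constraint $\sum_{i\in C_k}\omega_i=\rho_k|C_k|$ determines $e^{\beta_k/\alpha}=\rho_k|C_k|\big/\sum_{j\in C_k}e^{-L(z_j,\theta)/\alpha}$, and therefore
\[
\omega_i=\rho_k|C_k|\,\frac{e^{-L(z_i,\theta)/\alpha}}{\sum_{j\in C_k}e^{-L(z_j,\theta)/\alpha}},
\]
which is precisely the softmax-type expression claimed in the proposition. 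Note that this automatically yields $\omega_i>0$, so the solution is interior and the logarithm is always well defined, meaning there is no boundary case to check separately.

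The decisive step is substituting these optimal weights back into $f$ and watching the loss terms cancel. Writing $S_k(\theta)=\sum_{i\in C_k}e^{-L(z_i,\theta)/\alpha}$ and $c_k=\rho_k|C_k|$, one has $\ln\omega_i=\ln c_k-L(z_i,\theta)/\alpha-\ln S_k(\theta)$, so that within each class the combination $\sum_{i\in C_k}\omega_iL(z_i,\theta)+\alpha\sum_{i\in C_k}(\omega_i\ln\omega_i-\omega_i)$ collapses: the $\sum_i\omega_iL(z_i,\theta)$ contribution coming from the linear part exactly cancels the $-\tfrac{1}{\alpha}\sum_i\omega_iL(z_i,\theta)$ contribution hidden inside the entropy, leaving only $\alpha c_k\ln c_k-\alpha c_k\ln S_k(\theta)-\alpha c_k$. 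Summing over $k$ and discarding the terms $\alpha c_k\ln c_k-\alpha c_k$, which are independent of $\theta$, gives $f(\theta,\omega(\theta))=-\alpha\sum_k c_k\ln S_k(\theta)+\tfrac{\lambda}{2}\|\theta\|_2^2+\text{const}=h(\theta)+\text{const}$.

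Since the eliminated profile differs from $h$ only by an additive constant, minimising $h(\theta)$ yields the same $\hat\theta$ as the original joint problem, and the optimal weights are obtained by evaluating the formula at $\hat\theta$; this establishes the claimed equivalence. I expect the only genuinely delicate point to be the bookkeeping in the cancellation: one must keep careful track of signs and of the $-\omega_i$ term in the entropy so that the apparent loss-dependence of the entropy sum is correctly combined with the explicit $\sum_i\omega_iL(z_i,\theta)$ term. The convexity and interiority arguments, while necessary to certify that the Lagrange stationary point is the global minimiser and that $h$ inherits the correct minimisers, are routine by comparison.
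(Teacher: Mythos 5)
Your proposal is correct and follows essentially the same route as the paper's proof: solve the inner problem over $\omega$ via Lagrange multipliers for the per-class constraints, obtain the softmax-form weights, substitute back, and verify that the loss terms cancel to leave $h(\theta)$ plus a $\theta$-independent constant. Your explicit appeal to convexity and separability of the inner problem to certify that the stationary point is the unique global minimiser is a small added refinement over the paper's stationarity argument, but the decomposition and key computation are the same.
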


\begin{proof}
For convenience, we let $\ell_i = L(z_i,\theta)$ and $n_k = \rho_k|C_k|$ 
and 
\[
F(\theta,\omega,r) = \sum_i \omega_i \ell_i + \alpha (\omega_i \log(\omega_i) - \omega_i) + \lambda \frac{1}{2}||\theta||_2^2 + r_k\left(\sum_{i \in C_k} \omega_i - n_k\right),
\]
where $r_k$ is a Lagrange factor associated with class $k$.
Our goal is now to find stationary points of $F(\theta,\omega,r)$. 
Differentiate with respect to $\omega_i$ and set to $0$. This gives 
\begin{equation*}
    \ell_i + \alpha \log(\omega_i) + r_k = 0,
\end{equation*}
which has the unique solution for $\omega_i$ for each $i$
\[\omega_i = e^{-(\ell_i+r_k)/\alpha}\]
which thus has to be satisfied for all $i$ for any stationary point of $F$.

Inserting into the constraints $\sum_{i \in C_k} \omega_i = n_k$ and solving for $r_k$, we further get
\begin{equation} \label{ea}
    \omega_i = n_k \frac{e^{-\ell_i/\alpha}}{\sum_{j \in C_k} e^{-\ell_j/\alpha}}.    
\end{equation}
Another way of putting this is that the global minimum of $f(\theta,\omega)$ belongs to the subspace $D$ of pairs $(\theta,\omega)$ that satisfy (\ref{ea}) for all $i$. Conversely, since any $\theta$ corresponds to a unique $(\theta,\omega) \in D$ via (\ref{ea}), inserting (\ref{ea}) back into $f$, solving for $\theta$ to get the minimum $\hat{\theta}$ and then taking $\hat{\omega}_i$ by inserting $\hat{\theta}$ into (\ref{ea}), gives the global minimum of $f(\theta,\omega)$. It remains to show that inserting (\ref{ea}) into $f(\theta,\omega)$ gives $h(\theta)$. Doing the insertion and for convenience dropping the ridge term (which does not depend on $\omega$) gives

\begin{align*}
&
\sum_i \omega_i \ell_i + \alpha (\omega_i \log(\omega_i) - \omega_i) \\
&= \sum_k \sum_{i \in C_k} \omega_i \ell_i + \alpha (\omega_i \log(\omega_i) - \omega_i)  \\ 
&= \sum_{k} \sum_{i \in C_k} n_k \frac{e^{-\ell_i/\alpha}}{\sum_{j \in C_k} e^{-\ell_j/\alpha}} \ell_i +
\alpha \left(n_k \frac{e^{-\ell_j/\alpha}}{\sum_{j \in C_k} e^{-\ell_j/\alpha}} 
\, \log \left(n_k \frac{e^{-\ell_i/\alpha}}{\sum_{j \in C_k} e^{-\ell_j/\alpha}}\right) - n_k \frac{e^{-\ell_i/\alpha}}{\sum_{j \in C_k} e^{-\ell_j/\alpha}}\right) \\
&= -\alpha \sum_k n_k \log\left(\sum_{j \in C_k} e^{-\ell_j/\alpha}\right) + n_k \log(n_k) - n_k.
\end{align*}

Since $n_k$ is independent of $\theta$ and $\omega$, we are done.
\end{proof}

In summary, using $g(\omega) = \sum_i (\omega_i\log(\omega_i) - \omega_i)$ with the constraints $\sum_{i \in C_k}\omega_i=\rho_k|C_k|$, we simply end up with a new minimisation objective with no extra parameters  (besides
the hyperparameter $\alpha$ which still controls how much the observation weights are allowed to deviate from equal).

\section{Theoretical results} \label{section_theory}
\medskip

In what follows, we will make a theoretical justification of the weight penalty in the setting of binary classification with logistic regression. This will first be done for the very simplistic case of one-dimensional covariates without intercept term and then expanded to the multivariate setting. Conditions for the results will be gradually stronger, but in each case still natural. 
The model will be assumed to not be be overparametrised; the number of parameters $r$ will regarded as fixed and results will be for limits as the number of observation grows to $\iy$. Hence we discard the ridge penalty term from (\ref{target}).
The model may thus be written as
\[\mathbb{P}(y^*_i=1|\bx_i) = \frac{e^{\bs^T\bx_i}}{e^{\bs^T\bx_i}+e^{-\bs^T\bx_i}}\]
for an unknown vector of parameters $\bs=[(s_1,\ldots,s_p)^T$, where we write $\bx_i$ for the vector of covariates associated with observation $i$ and $y^*_i$ for the true label of the $i$th data point to distinguish it from the possible erroneous label $y_i$; we will consistently use notation with $*$'s to denote quantities corresponding to true labels. 

For standard logistic regression, $b\bs$ is estimated as
\begin{equation} \label{slogist}
    \hat{\bs}^*_n=\text{argmax} \, \mathcal{L}^*_n(\bs),
    \end{equation}
    where
    \begin{align*} n\mathcal{L}^*_n(\bs) &= n\mathcal{L}^*_n(\bs;\mathbf{x},\mathbf{y}) \\ 
    &= \bs^T\left(\sum_{i:y^*_i=1}\bx_i - \sum_{i:y^*_i=0}
\bx_i \right) -\sum_{i=1}^n \log(e^{\bs^T\bx_i}+e^{-\bs^T\bx_i}).
\end{align*}

Let us now focus on the case $r=1$ with no intercept term for a while, i.e.\ on the model
\[\mathbb{P}(y^*_i=1|x_i) = \frac{e^{sx_i}}{e^{sx_i}+e^{-sx_i}}\]
for an unknown one-dimensional slope $s$.

We claim that under various natural distributions of the covariates, one can always find an $\alpha$ such that (\ref{target}) used on noisy data will, in the limit as $n \rightarrow \iy$, produce an estimate of $s$ that exactly coincides with the one for clean data; this is arguably best possible.  

\medskip

We start by showing that under very mild assumptions, contamination of the labels will always cause standard logistic regression to underestimate $s$. This is formalised by a proposition below. Since the introduction of a fair amount of notation will be needed to state the proposition, we introduce it along with proving the proposition and state the result as a summary.

Let the true data be $(x_i,y^*_i)$, $i=1,\ldots,n$ and let $y_i$ be the $i$'th label in the contaminated data set. 
To model data generation, let $p^*_k:=\Pro(y^*_i=k)$, $k=0,1$, the probability that a data point has true label $k$, where we make the obvious assumption that $p_1^*,p_0^* > 0$.
For the remainder of this section, the following assumptions are also made.

\begin{itemize}
    \item[(i)] Covariates whose true label is $k$ are distributed according to density $f^*_k$, $k=0,1$.
    \item[(ii)] Writing $X^*_k$ for a random variable distributed according to $f^*_k$, it holds that $\E[X^*_1]>0$, $\E[X^*_0] < 0$, $\Pro(X^*_1<0)>0$ and $\Pro(X^*_0>0)>0$.
\end{itemize} 

These assumptions ensure that the two distributions are overlapping but not equal. They also entail that $0 \leq \hat{s}^* < \infty$. 

As $n \rightarrow \infty$, $\mathcal{L}^*_n(s)$ converges uniformly on bounded sets to
\begin{equation}
\label{slogistlimit} 
   \mathcal{L}^*(s):= s(p^*_1\E[X^*_1]-p^*_0\E[X^*_0]) - \E[\log(e^{sX}+e^{-sX})],
\end{equation}

where $X=X^*$ is chosen according to the overall distribution of the covariates, and hence $\hat{s}^*:=\lim_{n \rightarrow \iy}\hat{s}^*_n$ is the $\text{argmax}$ of (\ref{slogistlimit}). The corresponding generalisation of this to $r \geq 2$ also holds.

Now assume that true labels $k$ are erroneously classified as the other class independently across observations (given $k$) but with a probability potentially dependent on $x_i$. This gives us the contaminated labels $y_i$ and we want to control what happens when the $y^*_i$:s are replaced by $y_i$ in the above. 

Let $X_k$ be a random variable with distribution as $x_i$ given $y_i=k$. In addition to (i) and (ii), we add the following assumptions. Here $p_k$ denotes the probability that a given observation is labelled (potentially erroneously) as $k$ and $X_k$ is the distribution of $x_i$ given $y_i=k$.
\begin{itemize}
    \item[(iii)] $\E[X^*_1] \geq \E[X_1]$ and $\E[X^*_0] \leq \E[X_0]$.
    \item[(iv)] $p^*_1\E[X^*_1]-p^*_0\E[X^*_0] > p_1\E[X_1]-p_0\E[X_0]$.
\end{itemize}
Condition (iii) is very natural as the possible mislabels tend to mix up $X^*_0$ and $X^*_1$. Condition (iv) can easily be shown to follow from the other assumptions when the mislabelling probability of a data point is independent of $x_i$ given $y^*_i$ (i.e.\ the probability may depend on its true label, but in no other way on $x_i$).

Let
\[\hat{s} = \text{argmax} \, \mathcal{L}(s)\]
where
\[\mathcal{L}(s)=s(p_1\E[X_1]-p_0\E[X_0]) - \E[\log(e^{sX}+e^{-sX})],\]
i.e.\ the limit estimate of $s$ with noisy data.

\medskip

Now $\hat{s}$ is the solution of
\[\frac{\partial \mathcal{L}(s)}{\partial s} = p_1\E[X_1]-p_0\E[X_0] - \E[X\tanh{sX}] = 0\]
and $\hat{s}^*$ is the solution of
\[p^*_1\E[X^*_1]-p^*_0\E[X^*_0] - \E[X\tanh(sX)] = 0.\]
Since $p^*_1\E[X^*_1]-p^*_0\E[X^*_0] > p_1\E[X_1]-p_0\E[X_0]$ and
$(\partial/\partial s)X\tanh(sX) = 4X^2/(e^{sX}+e^{-sX})^2 > 0$, so that (by the Dominated Convergence Theorem) $\E[X\tanh{sX}]$ is increasing in $s$, we get $\hat{s} < \hat{s}^*$. We have proved:

\begin{proposition} \label{prop:underestimate}
    For $p=1$ and under (i)-(iv), standard logistic regression underestimates $s$, i.e.\
    \[\hat{s} < \hat{s}^*.\]
\end{proposition}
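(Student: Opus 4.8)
The plan is to reduce the comparison of $\hat{s}$ against $\hat{s}^*$ to the monotonicity of a single scalar function, so that condition (iv) immediately forces the strict inequality. The first observation I would exploit is that $\mathcal{L}$ and $\mathcal{L}^*$ share \emph{exactly the same} nonlinear term $\E[\log(e^{sX}+e^{-sX})]$: relabelling the data changes which label each covariate carries but leaves the overall marginal law of the covariates $X$ untouched, so the expectation is taken against the same distribution in both cases. Consequently $\mathcal{L}$ and $\mathcal{L}^*$ differ only in the coefficient of their linear term, namely $A:=p_1\E[X_1]-p_0\E[X_0]$ versus $A^*:=p^*_1\E[X^*_1]-p^*_0\E[X^*_0]$, and condition (iv) says precisely that $A<A^*$.

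Differentiating and setting the derivative to zero, $\hat{s}$ and $\hat{s}^*$ are characterised as the solutions of $\phi(s)=A$ and $\phi(s)=A^*$ respectively, where $\phi(s):=\E[X\tanh(sX)]$. I would then show $\phi$ is strictly increasing. The cleanest route avoids differentiating under the expectation: for $s_2>s_1$ and any fixed value of $X$, the quantity $X\bigl(\tanh(s_2X)-\tanh(s_1X)\bigr)$ is nonnegative, since the two factors share the sign of $X$ when $X\neq0$, and is strictly positive whenever $X\neq0$; because assumption (ii) guarantees $\Pro(X\neq0)>0$, taking expectations gives $\phi(s_2)>\phi(s_1)$. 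Equivalently one may differentiate to obtain $\phi'(s)=\E[4X^2/(e^{sX}+e^{-sX})^2]>0$, as indicated in the text. With $\phi$ strictly increasing and $A<A^*$, monotonicity immediately yields $\hat{s}<\hat{s}^*$.

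The main technical points to watch are integrability and the existence and uniqueness of the two maximisers. Finiteness of $\E[X\tanh(sX)]$, and hence of $\phi$, follows from $|\tanh|\le1$ together with the finite first moments implied by (ii); if one prefers the derivative computation, the dominated convergence theorem requires a dominating function, for which $4X^2/(e^{sX}+e^{-sX})^2\le X^2$ and a finite second moment suffice, so the sign-comparison argument above is the safer choice. For existence and uniqueness I would note that $s\mapsto\log(e^{sX}+e^{-sX})$ is convex, so $\mathcal{L}$ and $\mathcal{L}^*$ are strictly concave and each admits a unique maximiser; the stated fact $0\le\hat{s}^*<\infty$, combined with $A<A^*$, then guarantees that $\phi(s)=A$ also has a finite solution, so both estimates are well defined and the monotonicity conclusion applies.
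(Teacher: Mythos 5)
Your proposal follows essentially the same route as the paper: both reduce the comparison to the first-order conditions $\phi(s)=A$ and $\phi(s)=A^*$ with $\phi(s)=\E[X\tanh(sX)]$, $A=p_1\E[X_1]-p_0\E[X_0]$ and $A^*=p^*_1\E[X^*_1]-p^*_0\E[X^*_0]$, invoke condition (iv) to get $A<A^*$, and conclude via strict monotonicity of $\phi$. Your pointwise sign-comparison argument for that monotonicity is a slightly more elementary, domination-free substitute for the paper's differentiation-plus-dominated-convergence step, but the structure of the proof is the same.
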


Next we claim that under the reigning assumptions, optimising (\ref{target}) without the ridge penalty term with $\alpha$ sufficiently small will, in the limit as $n \rightarrow \iy$, estimate $s$ to $\infty$. We will not prove the full claim as the proof is tedious and we judge that the given form is sufficient for making the point that DRFit behaves in a good way in the present setting. As it turns out it is more convenient to work in terms of the parameter $b=1/\alpha$ rather than $\alpha$ itself. Hence in the remainder of this section, $b$ will consistently denote $1/\alpha$.

Let $f_k$, $k=0,1$ stands for the density of $X_k$, each a mixture of $f^*_0$ and $f^*_1$ depending on the mislabelling distribution. Observe that according to Proposition \ref{prop:new_loss}, finding the DRFit optimiser of $s$ for a given $b$, means to do the optimisation

\begin{equation} \label{eq:weighted_logistic_estimator}
\hat{s}_w(b) = \text{argmax}\,\mathcal{O}_b(s)
\end{equation}
    where 
    \begin{align*} 
        \mathcal{O}_b(s) &= p_1\log\E\left[\left(\frac{e^{sX_1}}{e^{sX_1}+e^{-sX_1}}\right)^{b}\right] \\
        &+ 
        p_0\log\E\left[\left(\frac{e^{-sX_0}}{e^{sX_0}+e^{-sX_0}}\right)^{b}\right].
    \end{align*}

\begin{proposition} \label{prop:overestimate}
    Consider the optimisation problem (\ref{eq:weighted_logistic_estimator}).
    Assume that $f_1(x) \geq f_1(-x)$ and $f_0(x) \leq f_0(-x)$ whenever $x \geq 0$. Then $\hat{s}_w(b) = \iy$ whenever $b \geq 1$.
\end{proposition}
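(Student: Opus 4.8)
The plan is to show that the one-dimensional objective $\mathcal{O}_b$ is non-decreasing on $[0,\iy)$ and, moreover, never attains its supremum at a finite $s$, so that its argmax is $\iy$. Write $\phi_s(x) = e^{sx}/(e^{sx}+e^{-sx}) = 1/(1+e^{-2sx})$, and note the two structural facts that drive everything: $\phi_s(-x) = 1-\phi_s(x)$, and for $x\geq 0$ one has $\phi_s(x)\geq \tfrac12$, increasing in $s$. With this notation the two terms of $\mathcal{O}_b$ are $p_1\log A_1(s) + p_0\log A_0(s)$ with $A_1(s)=\E[\phi_s(X_1)^b]$ and $A_0(s)=\E[\phi_s(-X_0)^b]$; since $0<\phi_s\leq 1$, all the expectations are finite and bounded, so no integrability subtleties arise.

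The core is a single elementary lemma: if $Z$ has density $f$ with $f(z)\geq f(-z)$ for all $z\geq 0$ and $b\geq 1$, then $s\mapsto \E[\phi_s(Z)^b]$ is non-decreasing. First I would fold the defining integral onto the positive half-line, using $\phi_s(-z)=1-\phi_s(z)$:
\[
\E[\phi_s(Z)^b] = \int_0^{\iy}\Big(\phi_s(z)^b f(z) + (1-\phi_s(z))^b f(-z)\Big)\,dz .
\]
It then suffices to show the integrand is non-decreasing in $s$ for each fixed $z\geq 0$. Since $u:=\phi_s(z)$ ranges over $[\tfrac12,1)$ and increases with $s$, this reduces to the purely algebraic claim that $G(u)=a u^b + c(1-u)^b$ is non-decreasing on $[\tfrac12,1]$ whenever $a\geq c\geq 0$ and $b\geq 1$; indeed $G'(u)=b\big(a u^{b-1}-c(1-u)^{b-1}\big)\geq bc\big(u^{b-1}-(1-u)^{b-1}\big)\geq 0$, where the first inequality uses $a\geq c$ and the second uses $u\geq 1-u$ together with the monotonicity of $t\mapsto t^{b-1}$ for $b\geq 1$. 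This is exactly the step where the hypothesis $b\geq 1$ enters. Integrating the pointwise inequality over $z\geq 0$ gives the lemma, with no differentiation under the integral sign required.

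I would then apply the lemma twice. For class $1$ it applies directly with $Z=X_1$, since the assumption $f_1(x)\geq f_1(-x)$ for $x\geq 0$ is precisely the hypothesis. For class $0$ I substitute $Z=-X_0$, whose density is $\tilde f(z)=f_0(-z)$; the assumption $f_0(x)\leq f_0(-x)$ for $x\geq 0$ translates exactly into $\tilde f(z)\geq \tilde f(-z)$ for $z\geq 0$, and $A_0(s)=\E[\phi_s(-X_0)^b]=\E[\phi_s(Z)^b]$, so the lemma yields monotonicity of $A_0$ as well. Hence $\log A_1$ and $\log A_0$ are non-decreasing and $\mathcal{O}_b$ is non-decreasing on $[0,\iy)$.

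Finally I would upgrade \emph{non-decreasing} to \emph{argmax at $\iy$}. As $s\to\iy$, $\phi_s(x)^b\to\one_{\{x>0\}}$, so by bounded convergence $A_1(s)\to\Pro(X_1>0)$ and $A_0(s)\to\Pro(X_0<0)$, both at least $\tfrac12$ (hence positive) by the symmetry hypotheses; thus $\mathcal{O}_b$ increases to a finite limit. A non-decreasing function reaches this limit at a finite point only if it is eventually constant, so it remains to rule that out, i.e.\ to establish strict increase. For $b>1$ this is automatic: in $G'(u)$ one has $u^{b-1}>(1-u)^{b-1}$ strictly for $u>\tfrac12$, so $A_1'(s)>0$ (and likewise $A_0'(s)>0$) at every finite $s>0$. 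The boundary case $b=1$ is the one delicate point: there $G$ is affine, and the strict gain comes only from the genuine asymmetry of the densities ($f_1(x)>f_1(-x)$, or the class-$0$ analogue, on a set of positive measure), which holds under the shifted-distribution assumptions in force; invoking this gives $\hat s_w(b)=\iy$. The main obstacle is thus not the monotonicity, which the folding plus the elementary inequality dispatch cleanly, but rather pinning down strictness at $b=1$ so that the maximiser lies genuinely at infinity rather than on a terminal plateau.
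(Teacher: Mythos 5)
Your proof is correct and follows essentially the same route as the paper's: both fold the expectation onto the positive half-line via $\phi_s(-x)=1-\phi_s(x)$ and reduce to pointwise monotonicity in $u=\phi_s(x)\in[\tfrac12,1]$ of $u\mapsto au^b+c(1-u)^b$ for $a\geq c\geq 0$ and $b\geq 1$ (the paper phrases this as the decomposition $c\left(u^b+(1-u)^b\right)+(a-c)u^b$, which is exactly the integrated form of your derivative bound), with the class-$0$ factor handled by the same sign-flip substitution. Your closing step --- passing from ``non-decreasing'' to ``argmax at $\iy$'' via the bounded-convergence limit and the strictness discussion, in particular the delicate case $b=1$ --- addresses a point the paper's proof leaves implicit, so it is added care rather than a different approach.
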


\begin{proof}
Finding the maximum of $\mathcal{O}_b(s)$ is equivalent to finding the maximum of $\mathcal{E}_b(s) = \exp(\mathcal{O}_b(s))$, i.e.\ the maximum of
\begin{align*}
    \mathcal{E}_b(s) &:=\E \left[ \left( \frac{e^{sX_1}}{e^{sX_1}+e^{-sX_1}}\right)^{b}\right]^{p_1} \\
    &\cdot\E\left[\left(\frac{e^{-sX_0}}{e^{sX_0}+e^{-sX_0}}\right)^{b}\right]^{p_0}.
\end{align*}

Now we claim that both factors of $\mathcal{E}_b(s)$ are increasing in $s$ when $b \geq 1$. To see that this is so, observe that for any $b \geq 0$,
\begin{align*}
    & \int_{0}^\iy \left(\frac{e^{sx}}{e^{sx}+e^{-sx}}\right)^b f_1(x) +
    \left(\frac{e^{-sx}}{e^{sx}+e^{-sx}}\right)^b f_1(-x)dx \\
    &= \int_0^\iy \left( \left(\frac{e^{sx}}{e^{sx}+e^{-sx}}\right)^b + \left(\frac{e^{-sx}}{e^{sx}+e^{-sx}}\right)^b\right)f_1(-x)dx \\
    &+ \int_0^\iy \left(\frac{e^{sx}}{e^{sx}+e^{-sx}}\right)^b (f_1(x)-f_1(-x))dx. 
\end{align*}
Since $f_1(x) \geq f_1(-x)$ by assumption, the second term is increasing in $s$. If also $b \geq 1$, the first term is nondecreasing in $s$ since the integrated function is then nondecreasing. This proves that the first factor of $\mathcal{E}(s)$ is increasing and the second factor follows analogously.

\end{proof}

A consequence of Proposition \ref{prop:overestimate} is that if it could be shown that $\hat{s}_w(b)$ is continuous in $b$ for $b \leq 1$ with $\lim_{b \uparrow 1}\hat{s}_w(b)=\iy$, would be that for some $b=b^*$ one has $\hat{s}_w(b^*)=\hat{s}^*$. However, $\mathcal{E}_b(s)$ is in general very difficult to analyse analytically, so it is hard to come up with explicit general conditions that ensure this. Numerically, we have verified that $\hat{s}_w$ is indeed continuous for numerous natural cases of distributions on $X^*_1$ and $X^*_0$. On the other hand, we have also found examples where continuity does in fact not hold. One such example is $\Pro(X^*_0=-1)=\Pro(X^*_0=1)=1/2$, $\Pro(X^*_1=-1)=\Pro(X^*_1=1)=1/10$, $\Pro(X^*_1=10)=4/5$ and 20\% mislabels for both classes. In this case we find that in the range $\alpha \in [1.54,1.69]$, two local maximums in $\mathcal{E}(s)$ appear and the the global maximum shifts from one to the other at approximately $\alpha=1.62$. (If one prefers, this example can be turned into one with continuous distribution on the covariate by adding some sufficiently small continuous random noise.)

However adding the strong symmetry assumptions that $X^*_0 =_d -X^*_1$, $p_1=p_0=1/2$ and $p^*_1=p^*_0=1/2$, maximising $\mathcal{E}_b(s)$ boils down to maximising the target function 
\[\E\left[\left(\frac{e^{sX_1}}{e^{sX_1}+e^{-sX_1}} \right)^{b}\right]
= \E\left[\frac{1}{(1+e^{-2sX_1})^{b}}\right]. \]
over $s$. This case can be handled more easily. Using in addition the same density condition as in Proposition \ref{prop:overestimate} gives the following.

\begin{proposition} \label{prop:uniqueness}
    Assume that $X^*_0 =_d -X^*_1$, $p^*_0=p^*_1=1/2$, $f_1^*(x)>f_1^*(-x)$ whenever $x>0$ and that mislabels occur independently and with equal probability $q=1-p<1/2$ for the two classes. Then $\hat{s}_w(b)$ is continuous increasing, $\lim_{b \uparrow 1} \hat{s}_w(b)=\iy$ and there is a $b^* > 1$ such that $\hat{s}_w(b^*) = \hat{s}^*$.
\end{proposition}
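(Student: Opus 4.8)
The plan is to use the symmetry reduction already recorded just before the statement, which collapses $\mathcal{O}_b$ to the single target $T_b(s) := \E[(1+e^{-2sX_1})^{-b}]$, and to analyse $\hat{s}_w(b) = \text{argmax}_s\, T_b(s)$ through its first-order condition. Writing $g(u) := (1+e^{-2u})^{-1}$, one has $T_b'(s) = 2b\,\E[X_1(1-g(sX_1))g(sX_1)^b] =: 2b\,\psi_b(s)$. First I would settle the two boundary signs. At $s=0$, $T_b'(0) \propto \E[X_1] = (1-2q)\E[X^*_1] > 0$, using $f_1 = (1-q)f^*_1 + q f^*_1(-\,\cdot\,)$ and $q<1/2$, so every maximiser is strictly positive. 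As $s\to\iy$ the positive contribution to $\psi_b$ (from $X_1>0$) decays like $e^{-2sx}$ while the negative contribution (from $X_1<0$) decays like $e^{-2bs|x|}$; since $b<1$ the latter dominates, so $T_b'(s)<0$ for large $s$. This already isolates the role of $b<1$ and guarantees a finite maximiser once we know it is unique.

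The central and hardest step is \emph{unimodality}: that for every $b\in(0,1)$, $\psi_b$ changes sign exactly once, from $+$ to $-$, on $(0,\iy)$. Folding the integral over $x>0$ and its reflection (using $g(-u)=1-g(u)$) rewrites $\psi_b(s)=\int_0^\iy x\,[G(1-G)]^b\big[(1-G)^{1-b}f_1(x)-G^{1-b}f_1(-x)\big]\,dx$ with $G=g(sx)\in(\tfrac12,1)$. For each fixed $x>0$ the bracket is strictly decreasing in $s$ and vanishes exactly once, at $s_x=\frac{1}{2x(1-b)}\log\frac{f_1(x)}{f_1(-x)}>0$ — this is precisely where the density domination $f_1(x)>f_1(-x)$ (inherited from $f^*_1(x)>f^*_1(-x)$ together with $q<1/2$) enters. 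The obstacle is that termwise single crossing does not by itself make the \emph{integral} single-crossing; I would close this either by a variation-diminishing (total-positivity) argument for the kernel, or equivalently by showing $T_b'(s)=0 \Rightarrow T_b''(s)<0$, which reduces to the moment inequality $(1+b)\E[X_1^2(1-g)g^{b+1}]>b\,\E[X_1^2(1-g)g^b]$ holding at critical points under the constraint $\E[X_1(1-g)g^b]=0$. Reconciling these different moments of $X_1$ is the delicate calculation I expect to be the crux.

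Granting unimodality, $\hat{s}_w(b)$ is well defined, and its continuity follows from Berge's maximum theorem (or the implicit function theorem applied to $\psi_b(\hat{s}_w(b))=0$, once $T_b''(\hat{s}_w(b))<0$ is in hand). For monotonicity I would differentiate the first-order condition implicitly, obtaining $\frac{d}{db}\hat{s}_w(b)=-\,\partial_b\psi_b/\partial_s\psi_b$; since $\partial_s\psi_b<0$ at the maximiser, it suffices that $\partial_b\psi_b>0$ there. Here $\partial_b\psi_b=\E[X_1(1-g)g^b\log g]$, while the first-order condition says the signed measure $d\nu=X_1(1-g(sX_1))g(sX_1)^b f_1\,dx$ has total mass $0$. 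Because $\log g(sx)$ is increasing in $x$ and $\nu$ carries its positive mass on $\{x>0\}$ and its equal negative mass on $\{x<0\}$, a rearrangement argument gives $\int\log g\,d\nu>0$; hence $\partial_b\psi_b>0$ and $\hat{s}_w$ is strictly increasing.

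It remains to identify the two endpoints. As $b\uparrow1$ I would argue by contradiction: if $\hat{s}_w(b_n)$ stayed bounded along some $b_n\uparrow1$, a finite limit point would be a critical point of $T_1$, contradicting Proposition \ref{prop:overestimate}, by which $T_1$ is strictly increasing with no interior critical point; thus $\lim_{b\uparrow1}\hat{s}_w(b)=\iy$. As $b\downarrow0$, $\tfrac1b\log T_b(s)\to\E[\log g(sX_1)]$, whose maximiser solves $\E[X_1]-\E[X_1\tanh(sX_1)]=0$ — precisely the noisy-data logistic score equation under the present symmetry — so $\lim_{b\downarrow0}\hat{s}_w(b)=\hat{s}$, and Proposition \ref{prop:underestimate} gives $\hat{s}<\hat{s}^*$. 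Since $\hat{s}_w$ is continuous and strictly increasing on $(0,1)$ with values increasing from $\hat{s}<\hat{s}^*$ up to $\iy$, the intermediate value theorem yields a unique $b^*\in(0,1)$ (necessarily below $1$, consistent with Proposition \ref{prop:overestimate}) with $\hat{s}_w(b^*)=\hat{s}^*$.
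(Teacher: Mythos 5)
Your plan tracks the paper's own proof closely in its skeleton: the same scalar target (your $\psi_b(s)=\E[X_1(1-g(sX_1))g(sX_1)^b]$ is exactly the paper's $M(b,s)$, since $(1-g(u))g(u)^b=e^{-2u}/(1+e^{-2u})^{b+1}$), the same sign analysis at $s=0$ and $s\to\iy$, the same implicit differentiation $d\hat{s}_w/db=-\partial_b M/\partial_s M$ with the same rearrangement argument for $\partial_b M>0$ (the paper phrases it as multiplying the mass-zero integrand by the monotone function $\log(1+e^{-2sx})$, which is your $\log g$ argument up to sign), and the same endpoint identification $\hat{s}_w(0^+)=\hat{s}<\hat{s}^*$ followed by the Intermediate Value Theorem. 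However, there is a genuine gap, and it sits exactly where you yourself flag it: you never prove unimodality, i.e.\ that $\partial_s M(b,s)<0$ at every zero of $M(b,\cdot)$, equivalently that $T_b''<0$ at critical points. You name two candidate routes (a total-positivity argument, or verifying the moment inequality $(1+b)\E[X_1^2(1-g)g^{b+1}]>b\,\E[X_1^2(1-g)g^b]$ at critical points) but execute neither, calling it "the delicate calculation I expect to be the crux." Everything downstream collapses without it: well-definedness and continuity of $\hat{s}_w(b)$ (both your Berge and your implicit-function-theorem arguments need a unique, nondegenerate maximiser), the sign of the denominator in the monotonicity step, and hence the final IVT conclusion. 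A correct diagnosis of the hard step is not a proof of it.

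For comparison, the paper closes precisely this step, by what is essentially your second route but with a trick that makes it checkable: rather than attacking $T_b''$ directly, it normalises the support of $f_1$ into $[-1,1]$ (assuming bounded support first, then passing to the limit) and studies the difference $M'_s(b,s)-M(b,s)=\int r(b,s,x)\,xf_1(x)\,dx$ with $r(b,s,x)=((bx-1)e^{-2sx}-x-1)e^{-2sx}/(1+e^{-2sx})^{b+2}$. At a zero of $M$ this difference \emph{is} $M'_s$, and the pointwise inequalities $r(b,s,x)<0$ for $x>0$ and $r(b,s,x)+r(b,s,-x)<0$ for $x\in[0,1]$, combined with the density domination $f_1(x)>f_1(-x)$, force $M'_s<0$ there; this is what reconciles the "different moments of $X_1$" that stopped you. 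One point in your favour: your conclusion $b^*\in(0,1)$ is the internally consistent one — since $\hat{s}_w(b)=\iy$ for all $b\ge 1$ by Proposition \ref{prop:overestimate}, the proposition's stated "$b^*>1$" can only be a slip for the corresponding statement in terms of $\alpha=1/b$, namely $\alpha^*>1$, which is how the paper itself restates it afterwards.
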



\begin{proof}
From Proposition \ref{prop:overestimate}, we know that $\hat{s}_w(b) = \iy$ for $b \geq 1$, so we may assume that $b < 1$.
Taking the derivative of the target function $\mathcal{E}_b(s)$, the proposition follows once we have proved that the solution in $s$ of 
\[M(b,s) := \E[X_1g(sX_1,b)] = \int xg(sx,b)f_1(x)dx\]
where $f_1(x)=pf_1^*(x)+qf_1^*(-x)$ and 
\[g(t,b)= \frac{e^{-2t}}{(1+e^{-2t})^{b+1}},\]
is increasing in $b$. First note that 
\[\int xg(sx;b) dx<0\]
for $b<1$, from which it readily follows that $M(b,s)<0$ for sufficiently large $s$. Hence there is exists at least one $s$ for which $M(b,s)=0$ (since the condition $f_1(x)>f_1(-x)$ for $x>0$ makes sure that $M(b,s)>0$ for sufficiently small $s$).

Assume that the support of $f_1$ is bounded, i.e.\ there are $-\iy<a_1<0<a_2<\iy$ such that $f_1(x)=0$ for $x \not\in [a_1,a_2]$. By scaling $X$, we can then without loss of generality assume that $-1 \leq a_1 < a_2 \leq 1$.
The function $M$ is continuously differentiable and
\[\frac12 M'_s(b,s) = \frac{\partial}{\partial s}M(b,s) = \int \frac{x^2e^{-2sx}(be^{-2sx}-1)}{(1+e^{-2sx})^{b+2}}f_1(x) dx.\]
Then
\[M'_s(b,s)-M(b,s) = \int \frac{((bx-1)e^{-2sx}-x-1)e^{-2sx}}{(1+e^{-2sx})^{b+2}}xf_1(x)dx
=: \int r(b,s,x)xf_1(x)dx.\]
It is readily checked that $r(b,s,x)+r(b,s,-x) < 0$ for all $(b,s)$ and $x \in [0,1]$ as
\[r(b,s,x)+r(b,s,-x) = \frac{bx-1-(x+1)e^{-2bsx}-(b+1)e^{4sx}-(1-x)e^{2sx}}{(1+e^{2sx})^{b+2}},\]
and $r(b,s,x)<0$ for all $x>0$. Hence, since $f_1(x)>f_1(-x)$ for $x>0$, $M'_s(b,s)<0$ for any $s$ with $M(b,s)=0$. By taking limits as $-a_1,a_2 \rightarrow \iy$, we can drop the assumption of bounded support and have in general that $M'_s(b,s) < 0$ whenever $M(b,s)<0$.

 The Implicit Function Theorem states that for each $(b_0,s_0)$ with $M(b_0,s_0)=0$, there is an open neighbourhood where $s$ is uniquely defined as a continuous function $s(b)$ of $b$ implicitly given by $M(b,s)=0$, provided that $M'_s(b_0,s_0) \neq 0$. Since the latter was just shown to be true for all $(b,s)$, it follows that $s(b)$ is uniquely defined for all $b$ and hence equals $\hat{s}_w(b)$, which is thus continuous.

Also, the implicit derivative of $s$ with respect to be is given by
\[\frac{\partial s}{\partial b}= - \frac{\partial M/\partial b}{\partial M/\partial s}.\]
We have just established that the denominator is negative. 
The numerator is
\[\frac{\partial M}{\partial b} = -\int \frac{x \log(1+e^{-2sx})}{(1+e^{-2sx})^{b+1}}f_1(x)dx,\]
which is positive as for $(b,s)$ with $M(b,s)=0$
\[\int \frac{xe^{-2sx}}{(1+e^{-2sx})^{b+1}}f_1(x) dx= 0,\]
and multiplying the integrated function by the decreasing function $\log(1+e^{-sx})$ hence results in a negative integral and hence a positive numerator. This gives
\[\frac{\partial s}{\partial b} = \frac{\partial \hat{s}_w}{\partial b}> 0.\]
So $\hat{s}_w(b)$ is increasing in $b$ and we the final claim now follows from Propositions \ref{prop:underestimate} and \ref{prop:overestimate} and the Intermediate Value Theorem.
\end{proof}

{\bf Remark on the proof.} An alternative argument that $M(b,s)=0$ cannot have more than one solution in terms of $s$ for a given $b$ is that if so, then at least one solution $s_0$ has to correspond to a local minimum of $\mathcal{E}_b$. Since $(\partial/\partial s)M(b,s)$ is the second derivative of $(1/b)\mathcal{E}_b$, it cannot be negative at $s_0$, which it was shown in the proof that it in fact is.

\medskip

A number of natural cases are covered by Proposition \ref{prop:uniqueness} and thus satisfy that $\hat{s}_w(\alpha)$ is decreasing and for some $\alpha^*>1$ it holds that $\hat{s}_w(\alpha^*)=\hat{s}^*$:

\begin{itemize}
    \item $X_1  \sim$ logistic with positive mean.
    \item $X_1  \sim$ Cauchy centred at a positive number.
    \item $X_1  \sim$ Gaussian with positive mean.
\end{itemize}

Next we move to the multivariate case $r \geq 2$. The above results show that with a single covariate, there is at least one choice of $b$ that improves on standard logistic regression and under some reasonable symmetry conditions a $b$ that completely cancels the effect of mislabelling. The arguments are one-dimensional in nature, and in particular rely on the Intermediate Value Theorem. In higher dimension, it is intuitively unlikely to be true that there is a choice of $b$ that would cancel the fact of mislabelling altogether, since the estimates $\hat{s}_w(b)$, which are now multi-dimensional, now describe a curve in $\mathbb{R}^r$ and cannot be expected to hit $\hat{s}^*$ exactly. However, as it turns out this is exactly what happens in a very significant special case, namely when the covariate vector $\bX^*_1$ is multivariate Gaussian. In this case, very strong results hold; one can determine a vector $\bu$, independent of $b$ and the mislabelling probability $q$, such that $\hat{\bs}(b) \propto \bu$ and $\hat{\bs}^*$ can be determined explicitly.

\begin{theorem} \label{thm:exactly_right_multivariate}
 Assume that the distribution of the covariates $\bX^*_k$ of an observation with true label $k$ are multivariate Gaussian such that $\bX^*_0 =_d -\bX^*_1$, that $\E[\bX^*_1]=\one$ and $p_1^*=p^*_0=1/2$. Assume also that the mislabelling probability $q=1-p$ is class independent. Let $\Sigma$ be the covariance matrix of $\bX^*_1$ and let the unit vector $\bu$ be given by
 \[\bm{u} = \frac{\Sigma^{-1} \one}{\sqrt{\one^T \Sigma^{-2} \one}}.\] Then there exists an increasing function $\hat{c}:[0,1) \rightarrow \RR$ such that $c(b) \rightarrow \iy$ as $b \uparrow 1$ and
 \begin{itemize}
     \item[(i)] $\hat{\bs}(b) = \hat{c}(b)\bm{u}$,
     \item[(ii)]$\hat{\bs}^* = \Sigma^{-1}\one$.
 \end{itemize}
 In particular $\{\hat{\bs}(b)\}_{b \in [0,1)}$ is a half-line with its finite endpoint in $\hat{c}(0)\bm{u}$ and there is a $b^* \in [0,1)$ such that $\hat{\bs}(b^*) = \hat{\bs}^*$.

\end{theorem}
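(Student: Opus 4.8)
The plan is to reduce the $r$-dimensional optimisation to the one-dimensional Proposition \ref{prop:uniqueness} by first pinning down the \emph{direction} of $\hat\bs(b)$, and only then identifying the endpoint $\hat\bs^*$ by a short linear-algebra computation. By Proposition \ref{prop:new_loss}, $\hat\bs(b)$ maximises the multivariate analogue of $\mathcal{O}_b$, and the symmetry assumptions $\bX^*_0=_d-\bX^*_1$, $p^*_0=p^*_1=1/2$ together with class-independent mislabelling make the two factors equal; after exponentiating it suffices to maximise $\Psi(\bs):=\E[(1+e^{-2\bs^T\bX_1})^{-b}]$, where the contaminated covariate is $\bX_1=S\bm{W}$ with $\bm{W}\sim N(\one,\Sigma)$ and $S=\pm1$ with probabilities $p,q$ independent of $\bm{W}$. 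Writing $g(t)=(1+e^{-2t})^{-b}$, the central device is a gradient computation via Stein's identity for the Gaussian $\bm{W}$: since $\E[\bm{W}\,h(\bs^T\bm{W})]=\one\,\E[h(\bs^T\bm{W})]+\Sigma\bs\,\E[h'(\bs^T\bm{W})]$, one obtains
\[\nabla\Psi(\bs)=\lambda_1(\bs)\,\one+\lambda_2(\bs)\,\Sigma\bs,\]
with $\lambda_1,\lambda_2$ depending on $\bs$ only through $(\bs^T\one,\bs^T\Sigma\bs)$. Hence any finite stationary point satisfies $\Sigma\bs\propto\one$, i.e. $\bs\propto\Sigma^{-1}\one\propto\bu$, which is claim (i): $\hat\bs(b)=\hat c(b)\bu$ for a scalar $\hat c(b)$.

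Next I would restrict to the half-line $\bs=c\bu$, where $\Psi(c\bu)=\E[(1+e^{-2c\,\bu^T\bX_1})^{-b}]$ is \emph{exactly} the one-dimensional DRFit objective for the scalar covariate $\bu^T\bX_1$. This projected covariate is again a symmetric two-component Gaussian mixture, now with mean $\bu^T\one=\one^T\Sigma^{-1}\one/\sqrt{\one^T\Sigma^{-2}\one}>0$, so it satisfies every hypothesis of Proposition \ref{prop:uniqueness} (in particular $f^*_1(x)>f^*_1(-x)$ for $x>0$, which holds for any Gaussian with positive mean). Proposition \ref{prop:uniqueness} then delivers verbatim that $\hat c(b)$ is continuous and strictly increasing on $[0,1)$ with $\hat c(b)\to\iy$ as $b\uparrow1$, and that $\hat c$ attains the clean projected estimate at some $b^*\in[0,1)$.

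Third, I would identify the two endpoints of the half-line. Because a Gaussian mixture with equal covariances and equal priors has logistic posterior with log-odds $2(\Sigma^{-1}\one)^T\bx$, the clean model is correctly specified with true parameter $\Sigma^{-1}\one$; the population log-likelihood $\mathcal{L}^*$ is therefore maximised at the truth, giving claim (ii), $\hat\bs^*=\Sigma^{-1}\one$. By the definition of $\bu$ this reads $\Sigma^{-1}\one=\sqrt{\one^T\Sigma^{-2}\one}\,\bu$, so $\hat\bs^*$ sits on the half-line at scale $c^*=\sqrt{\one^T\Sigma^{-2}\one}$. A direct computation then shows the projected clean estimate equals $\bu^T\one/(\bu^T\Sigma\bu)=\sqrt{\one^T\Sigma^{-2}\one}=c^*$, so the $b^*$ from the previous step satisfies $\hat c(b^*)=c^*$, i.e. $\hat\bs(b^*)=\Sigma^{-1}\one=\hat\bs^*$, the final assertion.

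The main obstacle I anticipate lies in making the direction argument fully rigorous: one must first guarantee that for $b<1$ the global maximiser is finite (paralleling the one-dimensional existence argument) and then exclude the degenerate possibility $\lambda_1(\bs)=\lambda_2(\bs)=0$ at a stationary point off the line $\RR\bu$, where $\one$ and $\Sigma\bs$ are linearly independent. I would handle this by passing to the reduced variables $(\mu,\tau)=(\bs^T\one,\bs^T\Sigma\bs)$ and observing that, after optimising the scale along a unit direction $\bv$, the optimal value depends only on the standardised signal $\rho(\bv)=\bv^T\one/\sqrt{\bv^T\Sigma\bv}$, a generalised Rayleigh quotient maximised precisely at $\bv\propto\Sigma^{-1}\one$. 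Establishing that this optimal value is monotone in $\rho$ is the delicate point, and is essentially equivalent to the positivity of the $\one$-component $\lambda_1$ at the optimum.
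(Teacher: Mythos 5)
Your architecture matches the paper's: pin down the \emph{direction} of $\hat\bs(b)$ using Gaussianity, reduce to the one-dimensional Proposition \ref{prop:uniqueness} along that direction, then identify $\hat\bs^*$. Your Stein-identity decomposition $\nabla\Psi(\bs)=\lambda_1(\bs)\,\one+\lambda_2(\bs)\,\Sigma\bs$ is a correct and essentially equivalent repackaging of the paper's device, which instead uses that $\bv^T\bX^*_1$ is \emph{independent} of $\bs^T\bX^*_1$ whenever $\bv\perp\Sigma\bs$ (zero correlation implies independence for Gaussians), so that the gradient condition in such directions factorises. Your identification of $\hat\bs^*$ is in fact cleaner than the paper's: the paper verifies $\hat\bs^*=\Sigma^{-1}\one$ by an explicit integral computation exploiting that $W=(\Sigma^{-1}\one)^T\bX^*_1$ has equal mean and variance, whereas you observe that the logistic model is correctly specified with parameter $\Sigma^{-1}\one$ and invoke the Gibbs/KL argument; both are valid, and your Rayleigh-quotient bookkeeping $\bu^T\one/(\bu^T\Sigma\bu)=\sqrt{\one^T\Sigma^{-2}\one}$ checks out.

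However, there is a genuine gap, and you have located it yourself: excluding the degenerate possibility $\lambda_1(\bs)=\lambda_2(\bs)=0$ at a stationary point off the line $\RR\bu$. Without this, the implication ``stationary $\Rightarrow\Sigma\bs\propto\one$'' fails, and claim (i) --- the heart of the theorem --- is unproven. Your proposed fix (monotonicity of the scale-optimised objective in the generalised Rayleigh quotient $\rho(\bv)$) is, as you yourself note, essentially equivalent to positivity of $\lambda_1$ at the optimum, so as it stands it is circular. The paper closes exactly this hole with a short sign argument that your proposal is missing. Writing $g(t;b)=e^{-2t}(1+e^{-2t})^{-(b+1)}$, one has $g(-t;b)/g(t;b)=e^{2(1-b)t}$, so for $b<1$ the function $t\mapsto pg(t;b)-qg(-t;b)$ has a single zero $t_0=\log(p/q)/(2(1-b))>0$ and changes sign from positive to negative there. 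At any stationary point, radial stationarity $\E[Y(pg(Y;b)-qg(-Y;b))]=0$ holds automatically (take the inner product of $\nabla\Psi=0$ with $\bs$, where $Y=\bs^T\bX^*_1$); if in addition $\lambda_1\propto\E[pg(Y;b)-qg(-Y;b)]$ were zero, then
\[
\E[(Y-t_0)(pg(Y;b)-qg(-Y;b))]=0,
\]
which is impossible because the integrand is strictly negative except at $Y=t_0$ and $Y$ is nondegenerate. This single observation both rules out the degenerate case and, since $t_0>0$, forces $\lambda_1>0$ at radial stationarity --- precisely the positivity your Rayleigh-quotient route needs. Adding it would make your outline a complete proof; without it, the direction claim is asserted rather than proved. (The remaining loose end you flag, finiteness of the global maximiser for $b<1$, is left implicit in the paper as well, so I would not count it against you.)
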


\begin{proof}
The parameter estimates $\hat{\bs}$ are given by maximising $\E[(1+e^{-2\bs^T\bX_1})^{-b}]$, which means to solve the system of equations
\[\E[X^*_{1i}(pg(\bs^T\bX^*_1;b)-qg(-\bs^T\bX^*_1;b))]=0,\, i=1,\ldots,p\]
over $\bs$. Another way to phrase this is that $\hat{\bs}$ solves
\begin{equation} \label{eq:linear_combination_of_partial_derivatives}
\E[\bm{v}^T\bX^*_1 (pg(\bs^T\bX^*_1;b)-qg(-\bs^T\bX^*_1;b))]=0
\end{equation}
for all $\bm{v} \in \RR^r$. In particular this of course goes for $\bm{v} = \bs$, so
\begin{equation} \label{eq:one_dimensional_for_optimal_linear_combination}
\E[\bm{s}^T\bX^*_1 (pg(\bs^T\bX^*_1;b)-qg(-\bs^T\bX^*_1;b))]=0.
\end{equation}
Consider an $\bs$ that solves (\ref{eq:one_dimensional_for_optimal_linear_combination}), set $\bu$ to be the unit vector $\bs/||\bs||_2$.

For $\bv \in \RR^r$, $Cov(\bv\bX^*_1,\bu\bX^*_1) = \bv^T \Sigma \bu$ which is $0$ whenever $\bv$ is orthogonal to $\Sigma \bu$. Since $\bX^*_1$ is multivariate Gaussian, $\bv \bX^*_1$ is also independent of $\bu \bX^*_1$ for such $\bv$ and hence the following holds for all $\bv$ orthogonal to $\Sigma \bu$:
\begin{align} 
& \E[\bm{v}^T\bX^*_1 (pg(\bs^T\bX^*_1;b)-qg(-\bs^T\bX^*_1;b))] \nonumber \\
&= \E[\bm{v}^T\bX^*_1] \E[pg(\bs^T\bX^*_1;b)-qg(-\bs^T\bX^*_1;b)] =0. \label{eq:gradient_condition_for_independent_linear_combinations}
\end{align}
For simplicity, write for now $Y=\bs^T\bX^*_1$. 
We claim now that is is not possible to have $\E[Y(pg(Y;b)-qg(-Y;b))]=0$ and $\E[pg(Y;b)-qg(-Y;b)]=0$ at the same time. To see that, note that $g(-t;b)/g(t;b) = e^{2(1-b)t}$ from which it follows that for $b<1$, $pg(t;b)-qg(-t;b)$ has exactly one zero $t=t_0=\log(p/q)/2(1-b)$. Also, $pg(t;b)-qg(-t;b)$ is positive to the left and negative to the right of $t_0$. Now if the two expectations were both zero, then we would also have
\[\E[(Y-t_0)(pg(Y;b)-qg(-Y;b))] = 0.\]
However the random variable in the expectation is strictly negative except at $t_0$, a contradiction.

Having proved this claim, it now follows that whenever $\bv$ is orthogonal to $\Sigma \bu$, then $\E[\bv^T\bX^*_1]=0$. Since $\E[\bX^*_1]=\one$, it follows that such a $\bv$ must be orthogonal also to $\one$. 
Now let $\bv_1,\ldots,\bv_{r-1}$ be pairwise orthogonal and orthogonal to $\Sigma \bu$. Since these vectors must also be orthogonal to $\one$, it follows that $\Sigma \bu$ and $\one$ must align, i.e.\ $\bu=\Sigma^{-1}\one/||\Sigma^{-1}\one||_2$.

Now solving the remaining equation (\ref{eq:one_dimensional_for_optimal_linear_combination}) becomes to solve
\[\E[U(pg(cU;b)-qg(-cU;b))]=0]\]
for $c$, where $U=\bm{u}^T\bX^*_1$, a Gaussian variable with expectation $\bm{u}^T\one = \one^T \Sigma^{-1} \one> 0$. By Proposition \ref{prop:uniqueness}, the solution $\hat{c}(b)$ is unique and $\hat{s}(b)=\hat{c}(b)\bm{u}$.

Moreover, $\bm{u} \propto \one$ is independently of both $b$ and $p$. The first of these means that it follows from Proposition \ref{prop:uniqueness} that $\hat{c}(b)$ is increasing in $b$ and that $\{\hat{s}(b)\}_{b \in [0,1)}$ is the half-line $\{a\bm{u}: a \geq \hat{c}(0)\}$. The independence of $p$ implies that the optimum $\hat{\bs}^*$ for clean data is also in the direction of $\bm{u}$, i.e.\ $\hat{\bs}^* = a^*\bm{u}$ for some $a^*>0$, as this is the estimate corresponding to $q=b=0$. 
Applying Proposition \ref{prop:uniqueness} once again shows that $a^* \geq c(0)$, i.e.\ there exists a $b \in [0,1)$ such that $\hat{\bs}(b)=\hat{\bs}^*$. 
However, the theorem claims in addition that $\hat{\bs}^*$ exactly equals $\Sigma^{-1}\one$. That follows if $c = ||\Sigma^{-1}\one||_2$ satisfies $\E[Ug(cU;0)]=0$, in other words if, with $W=(\Sigma^{-1}\one)^T\bX^*_1$,
\[\E\left[W\frac{e^{-W}}{e^{W}+e^{-W}}\right] = 0.\]
It is readily computed that the mean and variance of $W$ are equal, namely they are both $\mu:=\one^T\Sigma^{-1}\one$. Hence
\begin{align*}
    \E\left[W\frac{e^{-W}}{e^{W}+e^{-W}}\right] &\propto
    \int w\frac{e^{-w}e^{-\frac{(w-\mu)^2}{2\mu}}}{e^w+e^{-w}}dw \\
    &\propto \int w\frac{e^{-\frac{w^2}{2\mu}}}{e^w+e^{-w}}dw = 0,
\end{align*}
where the last equality follows on observing that the integrated function is odd.
\end{proof}

{\bf Remark.} If one drops the scaling of covariates, the results generalise easily. Taking $\E[\bX^*_i] = \bm{\mu}$, then for clean data, one gets $\hat{\bs}^* = \Sigma^{-1}\bm{\mu}$, the estimates $\hat{\bs}(b)$ are on the line spanned by $\hat{\bs}^*$ and for some $b^*$, one has $\hat{\bs}(b^*) = \hat{\bs}^*$.

{\bf Remark.} There are only two places in the proof where the assumption of Gaussian covariates is used. One is implicitly used when applying Proposition \ref{prop:uniqueness} to $U$ as the assumption then guarantees that $U$ satisfies the density condition $f^*_1(x)>f^*_1(-x)$, but this condition would also be satisfied with many other covariate distribution assumptions. The other and more crucial one is to deduce the factorisation of the expectation in (\ref{eq:gradient_condition_for_independent_linear_combinations}), which fails since independence does not follow from zero correlation for non-Gaussian multivariate random variables. However, we believe that the required zero correlation between $\bv_i^T\bX^*_1$ and $g(\bs^T\bX^*_1;b)$ for the factorisation is not ``far off'' in most cases, in particular not if $r$ is large.

\smallskip

We have made a numerical computation with $r=2$ and with $X^*_{11}$ and $X^*_{12}$ independent and uniform on $[-1,3]$ and $[-2,4]$ respectively. It turns out that $\hat{\bs}^*$ and the $\hat{\bs}(b)$ closest to it are indeed very close. While the the unweighted estimator $\hat{\bs}(0)$ has $||\hat{\bs}(b^*)-\hat{\bs}^*||_2 / ||\hat{\bs}(0)-\hat{\bs}^*||_2 \approx 0.064$, for the optimal $b= b^* \approx 0.54$. We also tried the case with $X^*_{11}$ uniform on $[-1,3]$ and $X^*_{12}$ uniform on $[-1/4,5/4]$, i.e.\ with a bit more qualitative difference between the two coefficient distributions. We found that in this case, the optimal $b$ still produces an estimate that is a very large improvement: $||\hat{\bs}(b^*)-\hat{\bs}^*||_2 / ||\hat{\bs}(0)-\hat{\bs}^*||_2 \approx 0.046$. It would be interesting if one could come up with general bounds on $||\hat{\bs}(b^*)-\hat{\bs}^*||_2 / ||\hat{\bs}(0)-\hat{\bs}^*||_2$, but we leave that as an open problem.

\section{Numerical solver for DRFit}

In the general DRFit setting (i.e.\ with a penalty $g$ potentially different from the one given in Section \ref{Choice_of_weight_regularisation_function}), one cannot solve explicitly for $\omega$ in terms of $\theta$ in (\ref{ref:min}) and then has to settle for purely numerical methods. In that case, one has to alternate between applying numerical fitting of $\theta$ given $\omega$ and vice versa. This is to say that we split the optimisation into two sub-tasks:

\begin{align*}
    \text{Task \: 1: \: } \hat{\theta} = \underset{\theta}{\mathrm{\text{argmin}}} \sum_i \omega_i L(z_i,\theta) +  \alpha g(\omega) +  \frac{\lambda}{2} ||\theta||_2^2,
\end{align*}
\vspace{-.5cm}
\begin{align*}
    \text{Task \: 2: \: } \hat{\omega} = \underset{\omega, \omega \geq 0}{\mathrm{\text{argmin}}} \sum_i \omega_i L(z_i,\theta) + \alpha g(\omega) + \frac{\lambda}{2} ||\theta||_2^2.
\end{align*}

One alternates between these two tasks during training. Task $1$ is a standard minimisation problem for the network model parameters $\theta$.
The optimisation of $\omega$ is done with gradient decent which gives us the updating procedure 
$$\omega_i \leftarrow w_i - \beta[L(z_i,\theta) + \alpha g'(\omega)],$$ 
where $\beta$ is the learning rate for the observation weights. In order to speed up convergence, we use a burn-in period where we do not update the weights $\omega$. In this way, the model learns the more global structure of the data before we update the weights. In our experiments we found that a short burn-in of just a few epochs suffices.
In order to prevent the weights from becoming negative during training (something that can happen for the purely numeric algorithm), we use weight clipping to set negative weights to zero. 
In each iteration the weights are also normalised to sum to $\rho_c n$. This is to preserve the same learning rate throughout training. The training algorithm is summarised as Algorithm $\ref{algorithm:cost_weights}$.

\begin{algorithm}
\caption{Training algorithm with observation weights.}
\begin{algorithmic}
\Procedure{Train}{Training set $D$, $\alpha$,burn\_in,update\_frequency, $\rho_c$, $\beta, N$}

\State $\omega \gets \mathbf{1}$
\For{epoch = $0 \dots N-1$}
    \For{Batch $S \subseteq D$}
        \State Update $\theta$ with batch $S$ for current $\omega$.
        \If{(\text{epoch} $>$ \: \text{burn\_in}) \\ \hspace*{5.1em} \textbf{and}  \: (\text{mod}(\text{epoch},\text{update\_frequency}) \hspace*{0.5em}$ = 0$)}
            \State $\omega_S \gets \omega_S - \beta[L(S;\theta) + \alpha \nabla g(\omega_S)]$ 
            \For{i in S}
                \If{$(\omega_i < 0)$}
                    \State $\omega_i \gets 0$
                \EndIf
            \EndFor
            \State Normalise $\omega_S$ to mean $\rho_c$ for each class present in $S$
        \EndIf
    \EndFor
\EndFor
\Return
\EndProcedure
\end{algorithmic}
\label{algorithm:cost_weights}
\end{algorithm}

\section{Experiments}

We experimentally evaluate the performance of the DRFit method by training on different neural networks for classification tasks of varying difficulty. We train DRFit both (partly) analytically, in the sense of solving for $\omega$ in terms of $\theta$, and fully numerically according to Algorithm \ref{algorithm:cost_weights}. The reason for doing the latter is to see if there appears to be non-convergence phenomena that cannot be seen for the analytic solver. In each simulation, if not stated otherwise, we have used $\rho_c = p(c)/(1 - q(c))$ where $p(c)$ is the proportion among those instances that have been classified as class $c$ that are indeed of class $c$ and $q(c)$ is the proportion of instances for which the correct label is $c$, but have been classified as something else.
Good estimates of $\rho_c$ can be obtained from a set of well annotated validation data that only needs to be large enough that the proportions involved can be estimated with a decent precision. This scaling is used both for the DRFit method and standard regularised models.
Since one does not always have access to sufficient validation data to estimate the $\rho_c$:s, we include some experiments results when this scaling is omitted
(i.e.\ all $\rho_c$ are set to $1$) whose results are presented in the appendix. In all experiments a small burn-in period is used where we trained without training the observation weights.

The experiments are the following;

\begin{itemize}
    \item A small network on the MNIST dataset to classify 1's from 7's. The network has been chosen small enough that the number of parameters is small compared to the number of training examples: one hidden layer with eight nodes with ReLU activation, followed by a final logistic layer. Hence the network is not overparametrised per se and only the mislabels will cause it to overfit. We have hence here dropped the ridge regularisation term and can study the refined effect of observation weighting in a standard network.
    
    \item A subset of the well-known CIFAR-10 dataset. We have chosen to work with classifying cars vs aeroplanes. Here we use a convolutional neural network with three convolutional layers with $8$, $16$ and $32$ filters, max pooling between each layer and ReLu activations. This is followed by three dense layers with $124$, $64$ and $2$ neurons. The loss function used is standard cross-entropy loss. 
    
    \item The whole CIFAR-10 dataset. Here we have used the convolutional layers of the famous VGG16 network as preprocessing to extract features sent to a feed forward network with four layers with $64$, $48$, $32$ and $10$ neurons. Between each layer, we have used batch normalisation and ReLU activation. This gives us in total $1,611,002$ trainable parameters. 
     
\end{itemize}

For the first two datasets, we have considered three levels of distribution of mislabels: (i) 20\% mislabels in each class, (ii) 30\% mislabels in one class and 10\% in the other and (iii) 40\% mislabels in one class and none in the other. 
We have also compared with classification without noise. 
In the third dataset we have used $40 \%$ label noise in each class.

\subsection*{A small neural net on MNIST}

In order to distil the effect of the introduction of observation weights, we have chosen to train a minimalistic network on a contaminated subset of the MNIST dataset. The network in question was taken small enough that overfitting via overparametrisation in itself is not a problem and we have consequently dropped the ridge penalty term from the target loss functions. We have selected two classes, images of ones and sevens, and then sampled down the images to a size of $14 \times 14$. The network we used has one hidden layer with eight nodes followed by a final logistic layer. We introduced random label noise where we flipped labels from ones to sevens and sevens to ones in the training set according to the proportions specified. The hyperparameter $\alpha$ was optimised on an separate validation set. We then executed $100$ training runs with the optimal $\alpha$ and compared this with $100$ runs for the same network without observation weights. In Figure $\ref{fig:mnist_small_net_test_data}$, the results are presented. We trained DRFit both by Algorithm \ref{algorithm:cost_weights} and by the partly analytic solver for the observation weights and compared with the same network network with no regularisation. It is clearly seen that the unregularised network suffers heavily from overfitting while DRFit does not have this problem. 

In Figure $\ref{fig:histogram_of_weights_for_mnist_small}$ we can see that when we have $30 \%$ noise in ones and $10 \%$ noise in sevens, we almost get a perfect separation between the observation weights between the correctly classified data-points and the mislabelled ones. This explains how DRFit has virtually no  problems at all with overfitting; DRfit learns the false labels early on and disregards them completely from then on and since the model is not overparametrised in itself, the test accuracy remains optimal. Some of the few mislabelled images that DRFit could not detect can be seen in Figure \ref{fig:false_positives} along with some correctly classified images that DRFit considered mislabelled.

\begin{figure}[H]
    \centering
    \includegraphics[width=1\textwidth]{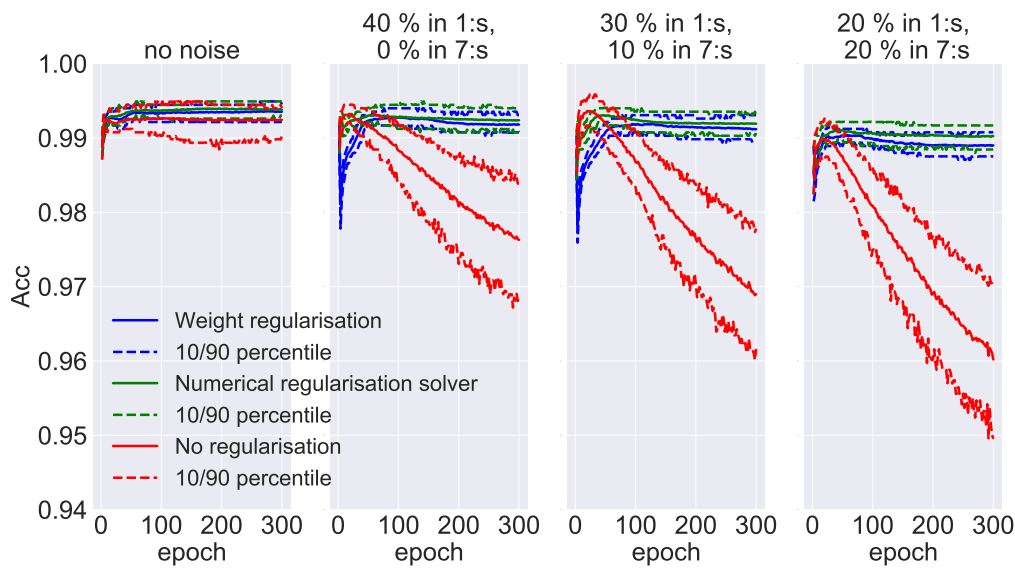}
    \caption{The mean accuracy on test data during training of $100$ different random initialisation settings, done with DRFit both with a numerical and a partly analytic solver and the same network without regularisation. This is done in four different label noise settings. With no label noise, with $40 \%$ noise in class 1 and $0 \%$ noise in class 2, $30 \%$ noise in class 1 and $10 \%$ noise in class 2 and with $20 \%$ noise in class 1 and $20 \%$ noise in class 2}
    \label{fig:mnist_small_net_test_data}
\end{figure}

\begin{figure}[H]
    \centering
    \includegraphics[height=5cm, width=1\textwidth]{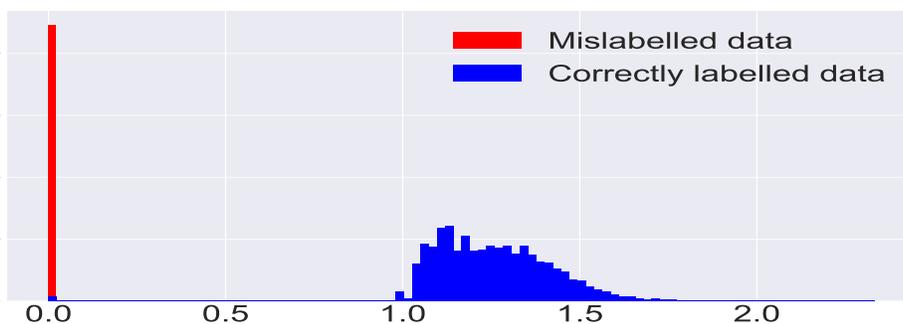}
    
    \caption{Histograms for the observation weight distributions for correctly labelled data and for mislabelled data. These are produced by training the DRFit on the subset of ones and sevens in the MNIST dataset.}
    \label{fig:histogram_of_weights_for_mnist_small}
\end{figure}

\begin{figure}[H]
    \centering
    \includegraphics[height=5cm, width=1\textwidth]{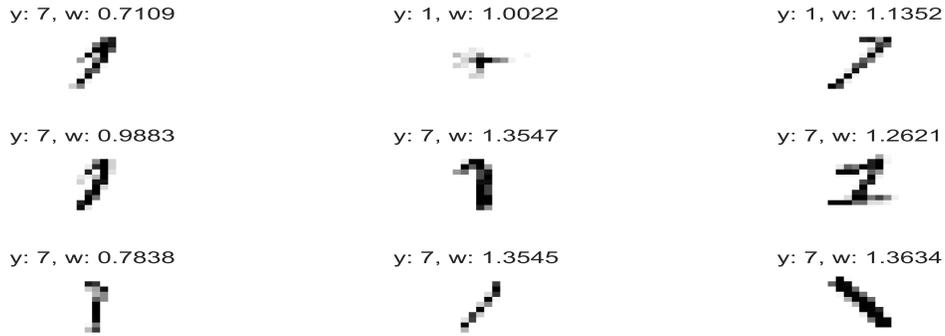}
    \caption{Example of images that where mislabelled but the DRFit method did not detect. Above of each image is the label the model was given and the weight the model set to the data point.}
    \label{fig:false_positives}
\end{figure}

\begin{figure}[H]
    \centering
    \includegraphics[height=5cm, width=1\textwidth]{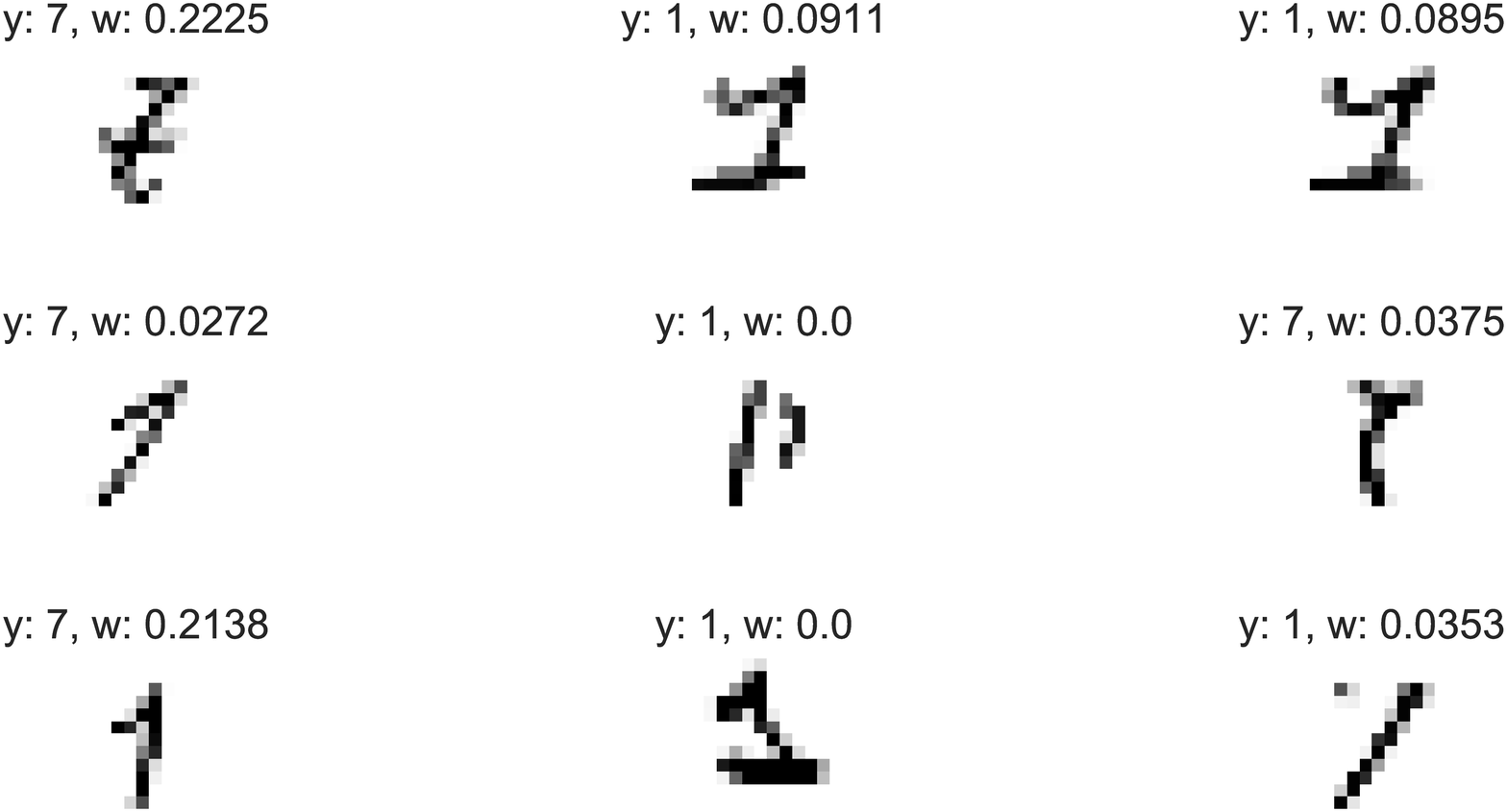}
    \caption{Example of images that are classified correctly but have been assigned a low observation weight by DRFit. Above each example is the label the model was given and the weight the model gave the data point}
    \label{fig:false_negatives}
\end{figure}

\begin{figure}[H]
    \centering
    \includegraphics[height=5cm, width=1\textwidth]{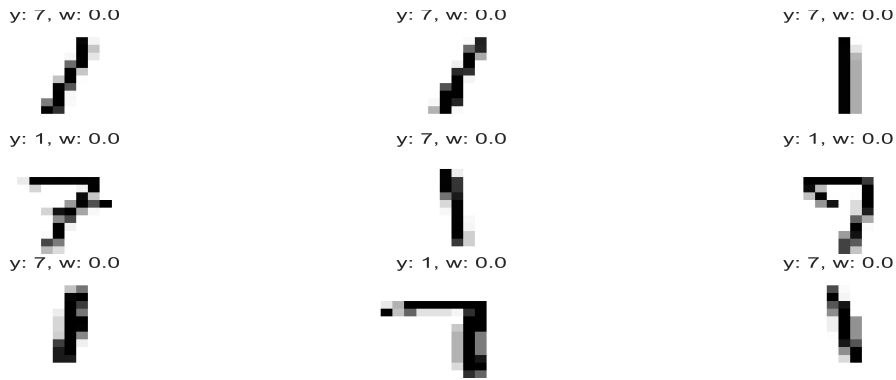}
    
    \caption{Example of images that were mislabelled and that DRFit could detect. Above each example is the label the model was given and the weight that DRFit assigned to the data point.}
    \label{fig:True_positive}
\end{figure}

\subsection*{A deep convolutional neural net on CIFAR-10}

After finding the optimal parameters $\lambda$ and $\alpha$ on a separate validation set, we trained the network $100$ times with different random parameter initialisations. We found that using ridge regularisation only is highly sensitive to initialisation and frequently crashes (i.e.\ validation accuracy soon drops to $50 \%$ accuracy and stays there for the rest of the training period). To make a direct comparison with DRFit we therefore remove all crashed runs from both DRFit and ridge. In these experiments, DRFit is by far more robust to parameter initialisation. Indeed, DRFit did not crash for any of the 100 runs, whereas ridge for some noise settings crashed up to nearly 40 \% of the runs and even 9 \% of the runs without any label noise.

In Table $\ref{tab:results_cifar10}$ we can see the corresponding hyperparameters and the final mean accuracy on test data for the different noise settings. 
Notice here that with a more uneven noise in the data, $\lambda$ tends to increase leading to a stronger regularisation with respect to the network parameters. 
No such trend can be observed in the $\alpha$ parameter. However, from our experience, the model performance tends to be relatively insensitive to variations on $\alpha$, meaning that there could be lots of noise present in the $\alpha$ optimisation. As intuition says, we can see that $\alpha$ is higher when no noise is present compared to the more balanced noise settings due to the benefit of including all data in the training set. 
In figure \ref{fig:runs_on_test_data_cifar10}, we can see that in the majority of the noise settings, the use of observation weights decreases overfitting. The optimal peak in accuracy, however, seems to be unchanged between the compared models.
While the ridge regularised network needs an extensive test data set to find a good stopping criterion, the DRFit network only needs an accurate estimation of $\rho_c$. (However, investigating the sensitivity with respect to the $\rho_c$ parameters will be saved for future research.)  

In Figure $\ref{fig:weights_hist_cifar10}$ we illustrate how the data weights are distributed for correct and mislabelled data, respectively, in the 30/10-noise setting. Here we notice that DRFit did an excellent job of identifying the mislabelled data, albeit a small number of mislabelled observations were not found and a few correctly labelled observations were erroneously found as mislabelled.
Notice that $\alpha$ works as a scaling parameter for the weights. This means that with a lower $\alpha$, we get a larger separation between correct and mislabelled data at the cost of having more correct data with lower weights; clearly too low an $\alpha$ will result in too much information being lost. By judging data points with observation weights below a fixed threshold as mislabelled, choosing this threshold will result in tuning the balance between the proportion of correct annotations rightly judged as correct and the proportion of mislabelled examples rightly judged as mislabelled. Figure \ref{fig:roc_weights} illustrates this. Notice e.g.\ that, when using optimal hyperparameters, one can find thresholds such that, when using the mean weight of data points over the $100$ runs to compare with the threshold, more than 90\% of correct labels are judged as correct and more than 90\% of false labels are considered false.

\begin{table}[h!]
    \centering
    \begin{tabular}{c|c|c|c|c|c|c}
        class 1 (\%) & class 2 ($\%$) & Opt. $\lambda$ & Opt. $\alpha$ & Acc test end& Acc test top &$\%$ crashes removed\\
        \hline
        \multicolumn{6}{c}{DRFit} \\
        \hline
        $0$ & $0$ & $0.10281$ & $3.0309$ & $0.94734$ &   $0.95409$&$0$\\
        $20$ & $20$ & $0.059512$ & $1.1712$ & $0.90177$ &$0.91854$&$0$\\
        $30$ & $10$ & $0.30801$ & $1.1222$ & $0.90878$ & $0.91911$&$0$\\
        $40$ & $0$ & $0.56716$ & $3.1629$ & $0.91544$ &  $0.92816$&$0$\\
        \hline
        \multicolumn{6}{c}{RidgeNet} \\
        \hline
        $0$ & $0$ & $0.50868$ & - & $0.94773$ & $0.95507$&$9$\\
        $20$ & $20$ & $0.16716$ & - & $0.82947$ & $0.91690$&$38$\\
        $30$ & $10$ & $0.26122$ & - & $0.83802$ & $0.91790$&$38$\\
        $40$ & $0$ & $0.23280$ & - & $0.82148$ & $0.92814$&$24$
    \end{tabular}
    \caption{Table showing optimal values of the hyperparameters $\lambda$ and $\alpha$ for different noise settings, mean accuracy at the end of training, the top accuracy during training and the percentage of crashes of the runs. Accuracy for both models are calculated on test data over the $100$ training runs with these optimal hyperparameter values.}
    \label{tab:results_cifar10}
\end{table}

\begin{figure}[H]
    \centering
    \includegraphics[height=4.6cm, width=1\textwidth]{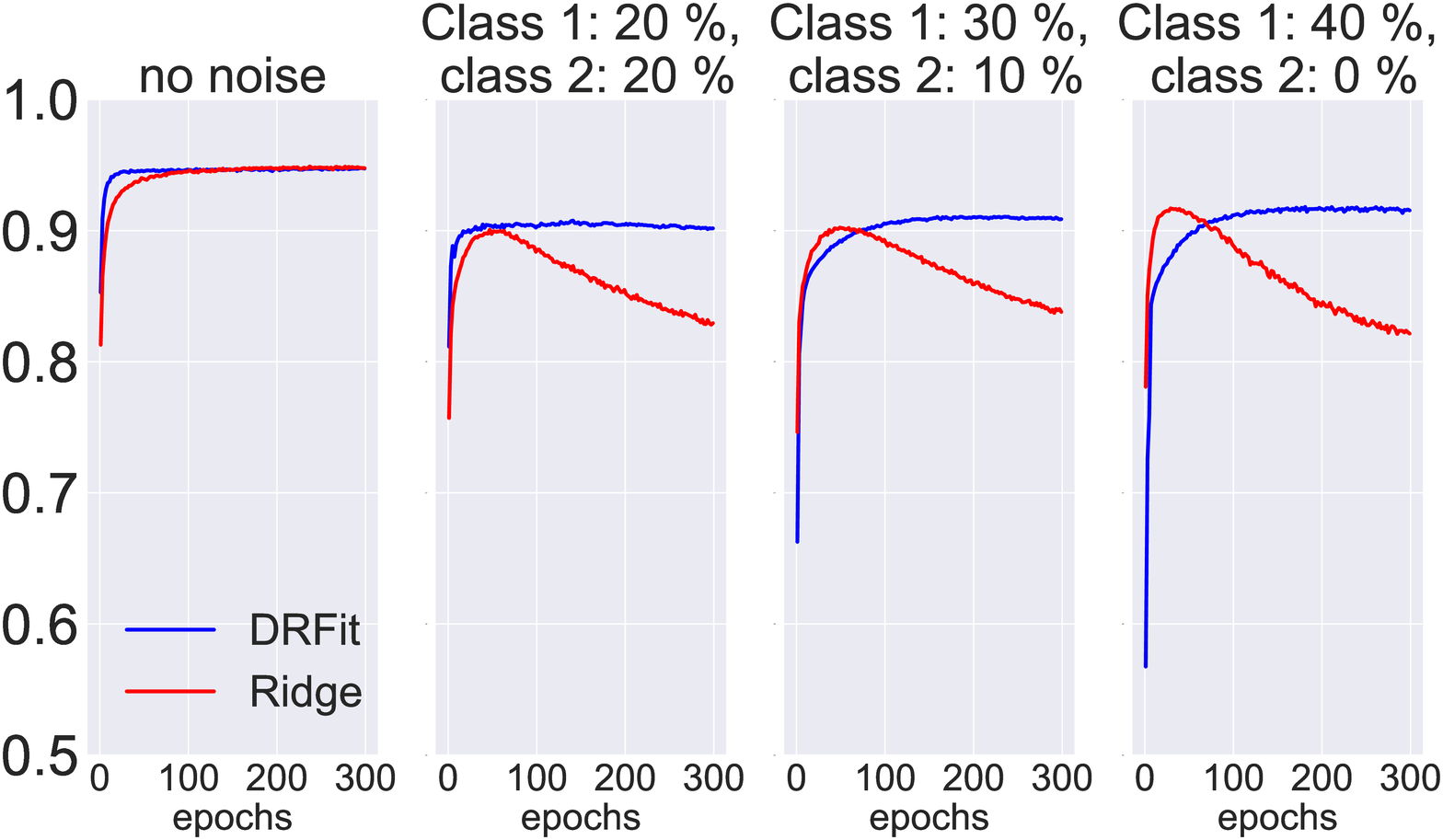}
    \caption{The average accuracy on a test data for DRFit and pure ridge regularisation during training when trained on the classes cars and planes in the CIFAR-10 dataset. In order to vary the noise we have done this in four different label noise settings. One with equally amount of noise in both classes, one with $30 \%$ noise in class one and $10 \%$ in class two, one with $40 \%$ noise in class one and no noise in class two and one setting with no label noise in the training data. In order to get a better comparison the runs that crashed are removed (see Table 1).}
    \label{fig:runs_on_test_data_cifar10}
\end{figure}

\begin{figure}[H]
    \centering
    \includegraphics[height=5cm, width= 1\textwidth]{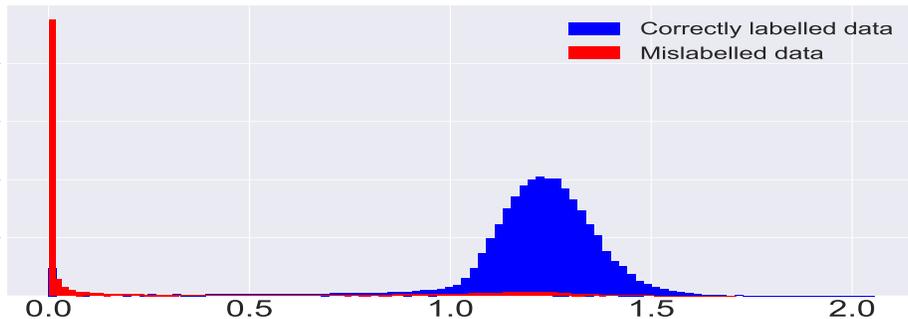}
    \caption{Histograms of the observation weight distributions over all simulations for mislabelled and correctly annotated data respectively for the 30/10-contaminated data set. The runs that crashed are removed.}
    \label{fig:weights_hist_cifar10}
\end{figure}

\begin{figure}[H]
    \centering
    \includegraphics[height=5cm, width=1\textwidth]{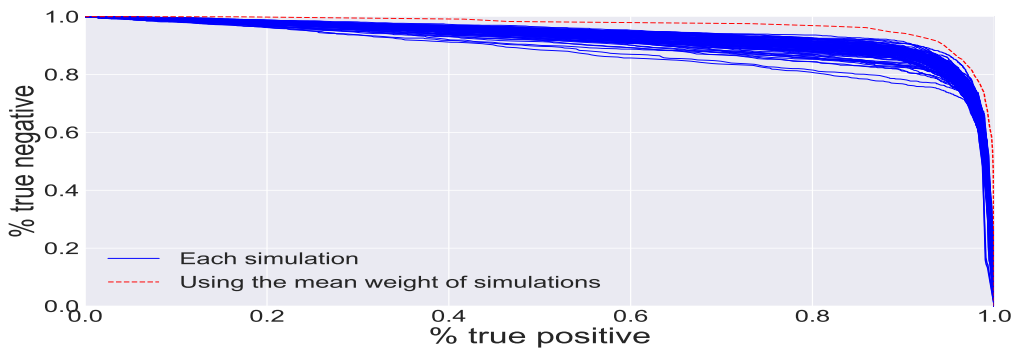}
    \caption{Curves that describe the balance between finding mislabelled points versus finding correctly labelled points while tuning a threshold $t$. For each value $t$ the model classifies a data point as mislabelled if its observation weight $\omega_i < t$ and as correctly labelled if $\omega_i > t$. Each blue curve is for one training run and the red curve is the result of using the average $\omega_i$ over the training runs to compare with $t$.}
    \label{fig:roc_weights}
\end{figure}

\subsection{A transfer learning model on CIFAR-10}

In this section, we consider classification of all ten classes in the CIFAR-10 dataset with $40 \%$ label noise in each class, i.e. $\rho_c = 1$ for each class. As preprocessing of the data, we used the top layers of the famous VGG16 network for feature extraction and then trained a smaller dense network with three layers for classification. We compare DRFit, regular $L2$ penalisation (i.e.\ ridge) and the Mixup method as presented in \citep{zhang2017mixup}. After finding the optimal hyper-parameters on a separate validation set and then retraining, the models are evaluated on a test set. The optimal hyperparameters, validation and test accuracy can be seen in Table $\ref{tab:parameters_transfer}$. In Figure $\ref{fig:test_acc_transfer}$ we can see how the test accuracy changes during training. Here, after some smoothing, we see that DRFit gives better performance
than regular Ridge regularisation and Mixup. In Figure $\ref{fig:weights_transfer}$ we see how the weights are distributed between correctly labelled data and mislabelled data and that in also this case, the DRFit can locate the mislabelled data quite well. 

\begin{table}[H]
    \centering
    \label{tab:transfer_learning_test_data}
    \begin{tabular}{c|c|c|c|c|c}
        Method & $\lambda_{opt}$ & $\alpha_{opt}$ & train acc &val acc & test acc \\
        \hline
        DRFit & $0.09$ & $1$ &$0.5426$ &$0.7942$ & $0.8041$\\
        Ridge & $0.01$ & - & $0.4978$ &$0.7668$ & $0.7977$ \\
        Mixup & $0.02$ & $0.2$ & $0.4440$ & $0.7704$ & $0.7662$ \\
    \end{tabular}
    \caption{Table showing the optimal hyper parameters, validation accuracy and test accuracy when training on CIFAR-10 with a transfer learning model. The validation and test accuracy is calculated as an average between five runs.}
    \label{tab:parameters_transfer}
\end{table}

\begin{figure}[H]
    \centering
    \includegraphics[height=5cm,width=1\textwidth]{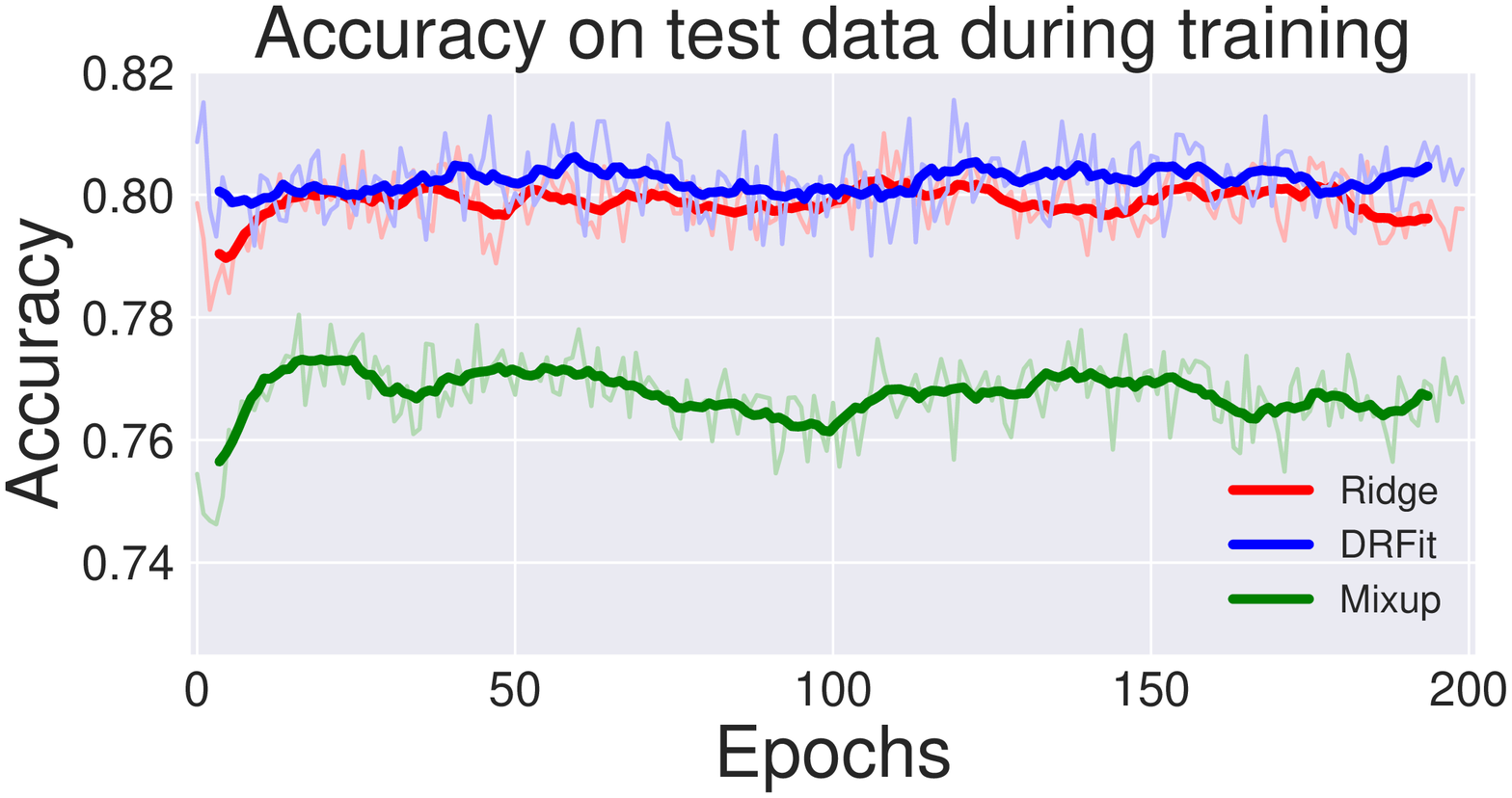}
    \caption{The test accuracy during training with optimal hyper-parameters. The optimal parameters are tuned on a separate validation set. For clarification, the results are smoothed with an average filter with the length of $10$ epochs. The raw data is represented with transparent lines.}
    \label{fig:test_acc_transfer}
\end{figure}

\begin{figure}[H]
    \centering
    \includegraphics[height=5cm,width=1\textwidth]{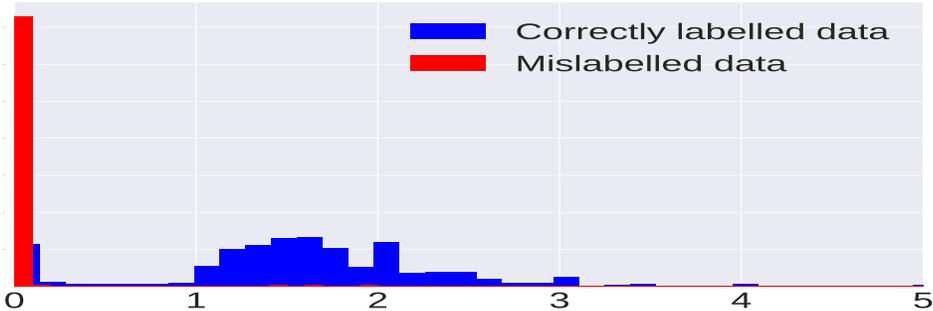}
    \caption{The distributions of weights produced from DRFit of correcly labelled and mislabelled data.}
    \label{fig:weights_transfer}
\end{figure}

Note that all models achieve high accuracy early on during training and show no sign of overfitting. This is probably due to the transfer model that is not tuned during training and already produces good features for this problem. However, since DRFit can identify mislabelled data very well (as seen in figure $\ref{fig:weights_transfer}$), the model is equipped for better generalisation.

\section{Discussion}

We have proposed a double regularisation framework, DRFit, for the training of a predictive model in the presence of mislabelled training data. We have presented a theoretical result that supports the method and we have experimentally demonstrated that DRFit improves performance over standard regularisation in three experiments on different data sets with different neural net predictive models and different noise distributions. 

We found that combining regularisation on observation weights with regularisation on model parameters (or with no regularisation when the model is not overspecified) results in higher, or at least as high, classification accuracy and significantly stronger robustness against overfitting.

In addition to test accuracy, we found, both on MNIST and CIFAR-10, that DRFit performs remarkably well for separating mislabelled training examples from those with correct labels. In the MNIST case, we could also clearly see that mislabelled training examples that DRFit failed to detect were in fact very ambiguous.

Yet another benefit of DRFit is that early stopping does not have to be explicitly applied; we can run training until convergence. By contrast, standard neural networks with pure parameter regularisation rely to a large extent on early stopping in the presence of label noise. (As shown in \cite{pmlr-v108-li20j}, early stopping will indeed make the model more robust to label noise, which is a result we also have seen in our experiments). 
DRFit is thus a more intelligible model in that we actually reach a minimum of the chosen loss minimisation objective. 
We strongly believe that the reason that DRFit does not overfit is the effect that was intended, namely that mislabelled data is to a very large extent "turned off" early on in the training process and that this works as a replacement for early stopping. This means that we get an optimisation problem that we actually optimise in contrast to the early stopping case. For early stopping to work effectively, one needs an extensive test data set to get a good criterion for stopping. In the DRFit case, one only needs the global parameters $\rho_c$ from the validation data for the method to work, potentially reducing the amount of data needed.  

The experiments on both datasets show that using observation weights does not have any adverse effects when training on a dataset without any mislabelled data. Moreover, in the MNIST case we see that DRFit is more consistent in the sense that the variation in test accuracy is smaller than for the nonregularised network. 

The beneficial effect of double regularisation seems to be pronounced in all tested noise distributions. While the inclusion of the scaling factors $\rho_c$ improved performance overall, DRFit still outperformed standard regularisation when $\rho_c=1$ was used (see the appendix). In practice, we can obtain estimates of the noise level for the different classes from the validation set. Alternatively, as a two-stage training alternative, one could estimate the noise levels from weight distributions from initial training with no scaling. This latter option has been reserved for future work.

An unavoidable drawback of double regularisation compared to standard ridge or lasso, is the need for two hyperparameters instead on one and hence optimisation of the them is a more costly process. However experiments indicate that DRFit is fairly insensitive to reasonably small changes in $\alpha$ and our optimisation algorithm was coarse-grained and far from a complete grid search. We also did not optimise the burn-in period or the learning rate for the observation weights in Algorithm \ref{algorithm:cost_weights}, neither during hyperparameter optimisation, nor during the test runs.
This means that DRFit could potentially perform 
better if further fine tuning in the hyperparameter optimisation were applied. The optimisation of these parameters is a non-trivial problem which we leave for future research.

In this paper, we restricted ourselves to binary classification and multiclass classification with balanced noise. However the general formulation of DRFit is by no means restricted to that, or even to classification problems. We expect, however, that in cases of class imbalance in noise, the case of multiclass classification needs more care.

Future research along the lines presented can analyse other types of double regularisation. This will potentially lead to new interesting findings on regularisation synergies. Another obviously important extension is to go beyond the assumption of random noise. While random noise is a natural assumption in some situations, there are clearly situations where noise may be more pronounced for training examples that are "close" to other classes in feature space and yet other situations where it is natural to assume adversarial noise.

\newpage
\bibliography{sample}

\newpage
\appendix
\section*{Appendix}

\begin{figure}[H]
    \centering
    \includegraphics[height=5cm, width=1\textwidth]{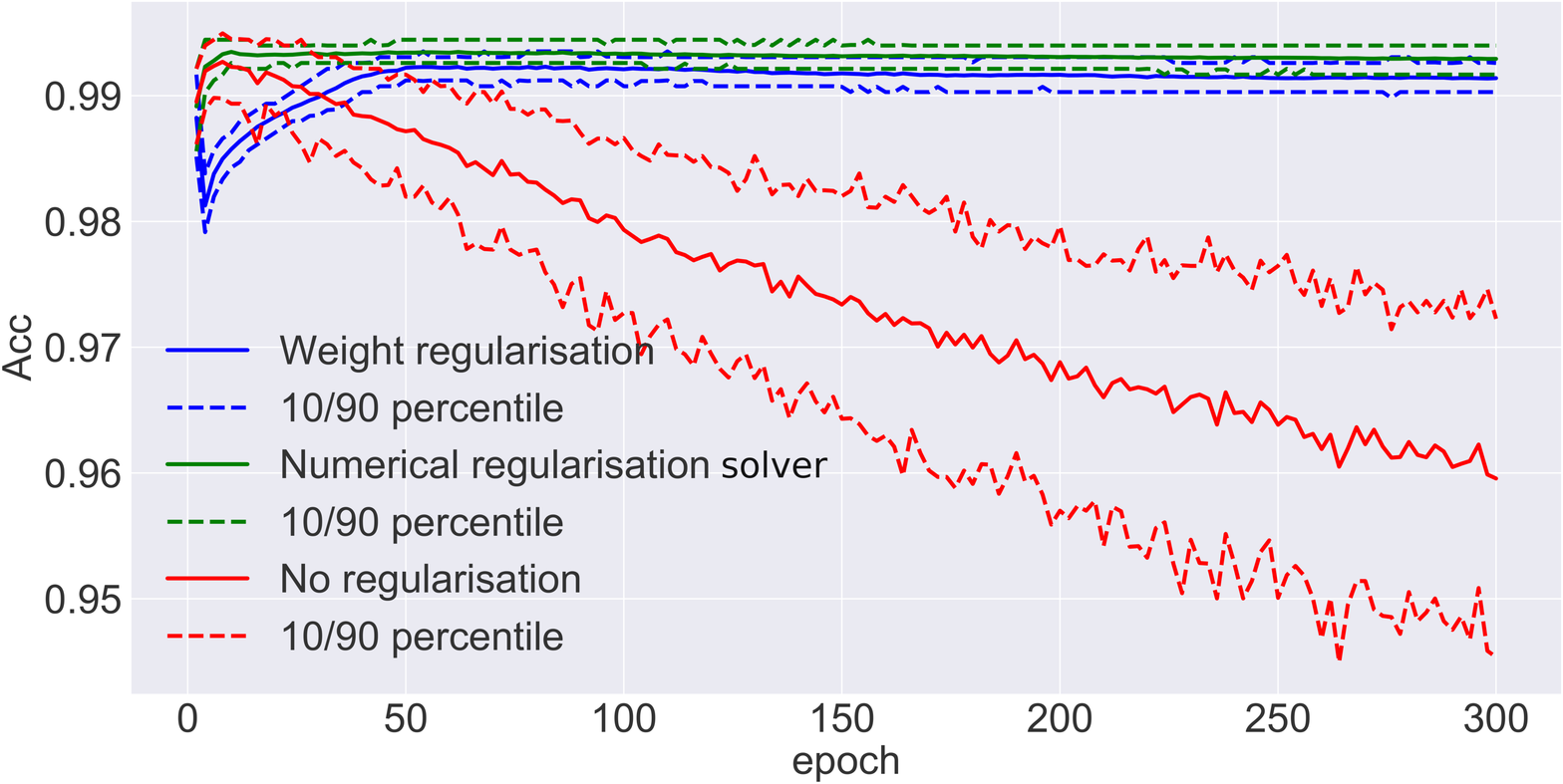}
    \caption{The mean accuracy on test data during training of $100$ different random initialisations, done with DRFit both with a numerical and a analytical solver and the same network without regularisation. This is done in with $30 \%$ label noise in class ones and $10 \%$ noise in class sevens. In this simulation we have assumed that we do not know anything about the noise distribution so $\rho_k = 1$.}
    \label{fig:mnist_small_net_test_data_appendix}
\end{figure}

\begin{figure}[H]
    \centering
    \includegraphics[height=4.6cm, width=1\textwidth]{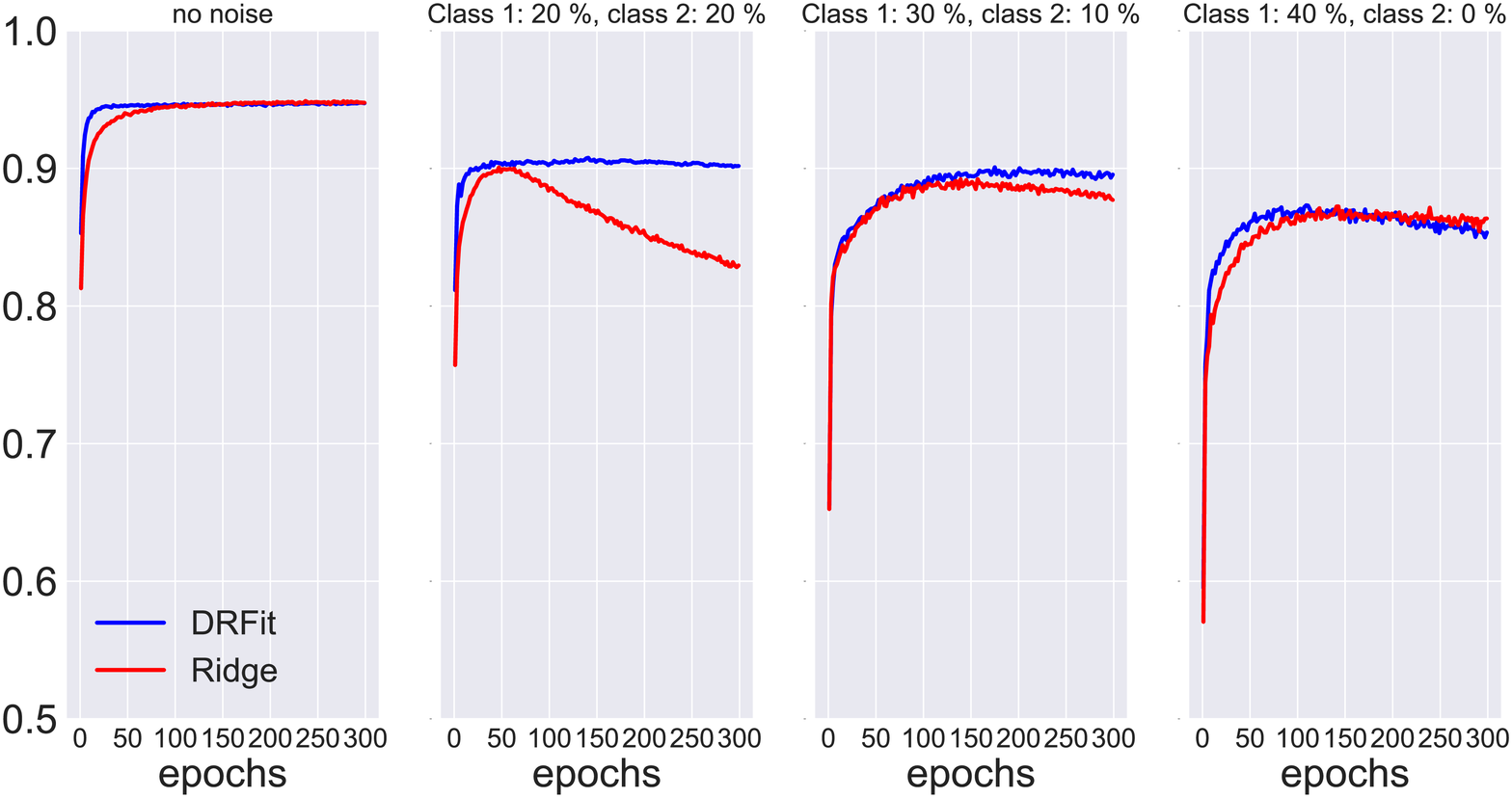}
    \caption{The average accuracy on a test data for DRFit and pure ridge regularisation during training when trained on the classes cars and planes in the CIFAR-10 dataset. In order to vary the noise we have done this in four different label noise settings. One with equally amount of noise in both classes, one with $30 \%$ noise in class one and $10 \%$ in class two, one with $40 \%$ noise in class one and no noise in class two and one setting with no label noise in the training data. In all settings we have assumed that we don't know anything about the noise distribution so $\rho_k = 1$. In order to get a better comparison the runs that crashed are removed.}
    \label{fig:runs_on_test_data_cifar10_appendix}
\end{figure}

\end{document}